\newcommand{\R}{\mathbb{R}}
\newcommand{\N}{\mathbb{N}}
\DeclareMathOperator*{\argmin}{arg\,min}
\newtheorem{theorem}{Theorem}
\newtheorem{lem}[theorem]{Lemma}
\newtheorem{prop}[theorem]{Proposition}
\newtheorem{cor}[theorem]{Corollary}
\newtheorem{definition}[theorem]{Definition}
\newtheorem{remark}[theorem]{Remark}
\newcommand{\perm}{\mathbb{P}}
\newcommand{\VEC}{\operatorname{vec}}
\newcommand{\trace}{\operatorname{tr}}
\newcommand{\diag}{\operatorname{diag}}
\newcommand{\matI}{\mathbf{I}}
\newcommand{\onevec}{\mathbf{1}}
\renewcommand{\paragraph}{\textbf}
  \def\command@factory#1{%
    \expandafter\def\csname vec#1\endcsname{\mathbf{#1}}
  }
  \def\command@factory#1{%
    \expandafter\def\csname mat#1\endcsname{\mathbf{#1}}
  }
  \def\command@factory#1{%
    \expandafter\def\csname set#1\endcsname{\mathcal{#1}}
  }
\def\greekvectors#1{%
 \@for\next:=#1\do{%
    \def\X##1;{%
     \expandafter\def\csname mat##1\endcsname{\boldsymbol{\csname##1\endcsname}}
     }
   \expandafter\X\next;
  }
}
\ificcvfinal\pagestyle{empty}\fi
\newif\ifarxiv
\begin{document}

\title{Higher-order Projected Power Iterations for Scalable Multi-Matching}

\author{Florian Bernard\textsuperscript{1,2}$\qquad$
Johan Thunberg\textsuperscript{3}$\qquad$ 
Paul Swoboda\textsuperscript{1,2}$\quad$
Christian Theobalt\textsuperscript{1,2}
\\
$\newline$
\\
\textsuperscript{1}MPI Informatics$\qquad$
\textsuperscript{2}Saarland Informatics Campus$\qquad$
\textsuperscript{3}Halmstad University
}

\makeatletter
\let\@oldmaketitle\@maketitle%
\renewcommand{\@maketitle}{\@oldmaketitle%
  \myfigure{}\bigskip}%
\makeatother

\newcommand\myfigure{%
  \centerline{
  \includegraphics[scale=.39]{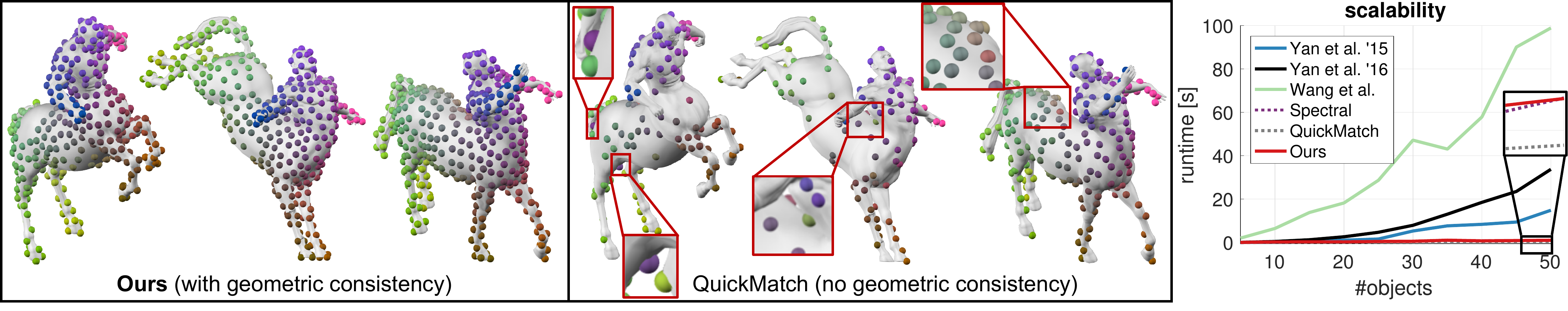}
  }
  \vspace{-1.5mm}
    \captionof{figure}{We propose a novel multi-matching method that is both \emph{scalable} and able to account for \emph{geometric consistency}. Left:~the \emph{centaurs} (in different poses) are consistently matched by our approach, as indicated by the coloured dots. Centre:~methods that ignore geometric relations (\textsc{QuickMatch}~\cite{Tron:kUBrCZhd}, \textsc{Spectral}~\cite{Pachauri:2013wx}) lead to wrong matchings, as evidenced by mismatching colours in the magnifications. Right:~existing methods that account for geometric consistency (Yan et~al.~'15~\cite{Yan:2015vc}, Yan et~al.~'16~\cite{Yan:2016vf}, Wang et al.~\cite{Wang:2017ub}) do not scale to large problems (shown are runtimes when matching $5$--$50$ objects, each having $20$ points, so that a total of $100$--$1000$ points are matched).
    }
    \label{fig:teaser}
  }

\maketitle
\begin{abstract}
The matching of multiple objects (e.g. shapes or images) is a fundamental problem in vision and graphics. In order to robustly handle ambiguities, noise and repetitive patterns in challenging real-world settings, it is essential to take geometric consistency between points into account. Computationally, the multi-matching problem is difficult. It can be phrased as simultaneously solving multiple (NP-hard) quadratic assignment problems (QAPs) that are coupled via cycle-consistency constraints. The main limitations of existing multi-matching methods are that they either ignore geometric consistency and thus have limited robustness, or they are restricted to small-scale problems due to their (relatively) high computational cost. We address these shortcomings by introducing a Higher-order Projected Power Iteration method, which is (i) efficient and scales to tens of thousands of points, (ii) straightforward to implement, (iii) able to incorporate geometric consistency, (iv) guarantees cycle-consistent multi-matchings, and (iv) comes with theoretical convergence guarantees. Experimentally we show that our approach is superior to existing methods.
\end{abstract}

\section{Introduction}
Establishing correspondences is a fundamental problem in vision and graphics that is relevant in a wide range of applications, including reconstruction, tracking, recognition, or matching. 
The goal of correspondence problems is to identify points in objects (e.g.~images, meshes, or graphs) that are semantically similar.
 While matching points independently of their neighbourhood context
is computationally tractable (e.g.~via the \emph{linear assignment problem} (LAP) \cite{Burkard:2009hp}), %
such approaches are limited to simple cases without ambiguities or repetitive patterns. In order to resolve ambiguities and avoid mismatches in challenging real-world scenarios, it is crucial to additionally incorporate the geometric context of the points, so that spatial distances between pairs of points are (approximately) preserved by the matching. To this end, higher-order information is commonly integrated into the matching problem formulation, e.g.~via the NP-hard \emph{quadratic assignment problem} (QAP) \cite{Lawler:1963wn}.

\emph{Multi-matching},~i.e.~finding matchings between \emph{more than two objects} (e.g.~an image sequence, a multi-view scene, or a shape collection) plays an important role in various applications, such as video-based tracking, multi-view reconstruction (e.g.~for AR/VR content generation) or shape modelling (e.g.~for statistical shape models in biomedicine \cite{Heimann:2009kv}). Computationally, finding valid matchings between more than two objects simultaneously is more difficult compared to matching only a pair of objects. This is because one additionally needs to account for \emph{cycle-consistency}, which means that compositions of matchings over cycles must be  the identity matching---even when ignoring higher-order terms, an analogous multi-matching variant of the \emph{linear assignment problem} that accounts for cycle-consistency results in a (non-convex) quadratic optimisation problem over binary variables, which structurally resembles the NP-hard quadratic assignment problem. 
When additionally considering higher-order terms in order to account for geometric relations between the points, multi-matching problems become even more difficult. 
For example, a multi-matching version of the quadratic assignment problem either results in a fourth-order polynomial objective function, or in a quadratic objective with additional (non-convex) quadratic \emph{cycle-consistency constraints}, both of which are to be optimised over binary variables.

Practical approaches for solving multi-matching problems can be put into two categories: (i) methods that jointly optimise for multi-matchings between all objects (e.g.~\cite{Huang:2013uk,Yan:2016vf,Shi:2016tj,Tron:kUBrCZhd,bernard:2018}) and (ii) approaches that first establish matchings between points in each pair of objects independently, 
and then improve those matchings via a post-processing procedure referred to as 
\emph{permutation synchronisation} \cite{Pachauri:2013wx,Chen:2014uo,zhou2015multi,Shen:2016wx,Maset:YO8y6VRb}. Approaches that jointly optimise for multi-matchings either ignore geometric relations between the points \cite{Tron:kUBrCZhd}, or are prohibitively expensive and thus only applicable to small problems (cf.~Fig.~\ref{fig:teaser}).
In contrast, while synchronisation-based approaches are generally more scalable (e.g.~synchronisation problems with a total number of points in the order of $10$k-$100$k can be solved), 
they completely ignore geometric relations (higher-order terms) during the synchronisation, and thus achieve limited robustness in ambiguous settings (cf.~Fig.~\ref{fig:teaser}).

The aim of this work is to provide a scalable solution for multi-matching that addresses the mentioned short-comings of previous approaches. Our main contributions are:
\begin{itemize}
	\item We propose a method that \emph{jointly optimises for multi-matchings} which is \emph{efficient} and thus applicable to \emph{large-scale} multi-matching problems.
	\item Our method is guaranteed to produce \emph{cycle-consistent multi-matchings}, while at the same time considering \emph{geometric consistency} between the points.
	\item Our \emph{Higher-order Projected Power Iteration (HiPPI)} method that has \emph{theoretical convergence guarantees} and can be implemented in few lines of code.
	\item We empirically demonstrate that our method achieves beyond state-of-the-art results on various challenging problems, including large-scale multi-image matching and multi-shape matching.
\end{itemize}

\section{Background \& Related Work}
In this section we review the most relevant works in the literature, while at the same time providing a summary of the necessary background.

\paragraph{Pairwise Matching:} The linear assignment problem (LAP) \cite{Burkard:2009hp} can be phrased as 
\begin{align}\label{eq:lap}
	\min_{X \in \perm} ~\langle A, X \rangle\,,
\end{align}
where $A$ is a given matrix that encodes (linear) matching costs between two given objects, $X \in \perm$ is a permutation matrix that encodes the matching between these objects, and $\langle \cdot,\cdot \rangle$ denotes the Frobenius inner product. The LAP can be solved in polynomial time, e.g. via the Kuhn-Munkres/Hungarian method~\cite{Munkres:1957ju} or the (empirically) more efficient Auction algorithm~\cite{Bertsekas:1998vt}. The quadratic assignment problem (QAP)~\cite{Lawler:1963wn}, which reads
\begin{align}\label{eq:qap}
		\min_{X \in \perm} ~ \VEC(X)^T W \VEC(X) \,,
\end{align}
additionally incorporates pairwise matching costs between two objects that are encoded by the matrix $W$. The QAP is a (strict) generalisation of the LAP, which can be seen by defining $W = \diag(\VEC(A))$ and observing that $X \in \perm$ implies $X = X \odot X$. Here, $\VEC(A)$ stacks the columns of $A$ into a column vector, $\diag(x)$ creates a diagonal matrix that has the vector $x$ on its diagonal, and $\odot$ is the Hadamard product. However, in general the QAP is known to be NP-hard~\cite{Pardalos:1993uo}. The QAP is a popular formalism for \emph{graph matching} problems, where the first-order terms (on the diagonal of $W$) account for node matching costs, and the second-order terms (on the off-diagonal of $W$) account for edge matching costs. Existing methods that tackle the QAP/graph matching include spectral relaxations~\cite{Leordeanu:2005ur,Cour:2006un}, linear relaxations \cite{Torresani:2013gj,swoboda2017b}, convex relaxations~\cite{Zhao:1998wc,Schellewald:2005up,Olsson:2007wx,Fogel:2013wt,Aflalo:2015hd,kezurer2015,Dym:2017ue,bernard:2018}, path-following methods~\cite{Zaslavskiy:2009fq,Zhou:2016ty,Jianga:vg}, kernel density estimation~\cite{Vestner:2017tj}, branch-and-bound methods~\cite{Bazaraa:1979fh} and many more, as described in the survey papers~\cite{Pardalos:1993uo,Loiola:2ua4FrR7}. Also, tensor-based approaches for higher-order graph matching have been considered \cite{duchenne2011tensor,Nguyen:2015vq}.

While the requirement $X \in \perm$ implies \emph{bijective} matchings, in the case of matching only two objects, the formulations \eqref{eq:lap} and \eqref{eq:qap} are general in the sense that they also apply to \emph{partial} matchings, which can be achieved by incorporating \emph{dummy points} with suitable costs. However, due to ambiguities with multi-matchings of dummy points, this is not easily possible when considering more than two objects. %

\paragraph{Multi-matching:} 
In contrast to the work \cite{Williams:1997vj}, where cycle-consistency has been modelled as soft-constraint within a Bayesian framework for multi-graph matching, in \cite{Yan:2013ve,Yan:2014wi} the authors have addressed multi-graph matching in terms of simultaneously solving pairwise graph matching under (hard) cycle-consistency constraints. In \cite{Yan:2015vc}, the authors have generalised \emph{factorised graph matching} \cite{Zhou:2016ty} from matching a pair of graphs to multi-graph matching. Another approach that tackles multi-graph matching is based on a low-rank and sparse matrix decomposition \cite{Yan:2015wr}. In \cite{Yan:2016vf}, a composition-based approach with a cycle-consistency regulariser is employed.
In \cite{kezurer2015}, the authors propose a semidefinite programming (SDP) relaxation for multi-graph matching by (i) relaxing cycle-consistency via a semidefinite constraint, and (ii) lifting the $n{\times} n$ permutation matrices to $n^2 {\times} n^2$-dimensional matrices. In order to reduce computational costs due to the lifting of the permutation matrices, the authors in \cite{bernard:2018} propose a lifting-free SDP relaxation for multi-graph matching. In \cite{park2018consistent}, the authors propose a random walk technique for multi-layered multi-graph matching. While there is a wide range of algorithmic approaches for multi-graph matching, the aforementioned approaches have in common that they are computationally expensive and are only applicable to small-scale problems, where the total number of points does not significantly exceed a thousand (e.g.~$20$ graphs with $50$ nodes each). In contrast, our method scales much better and handles multi-matching problems with more than $20$k points.

In \cite{cosmo2017consistent}, the authors use a two-stage approach with a sparsity-inducing $\ell_1$-formulation for multi-shape matching. While the effect of this approach is that only few multi-matchings are found, our approach obtains significantly more multi-matchings, as we will demonstrate later.

Rather than modelling higher-order relations between points, the recent approach \cite{Wang:2017ub} accounts for geometric consistency in 2D multi-image matching problems by imposing a low rank of the (stacked) 2D image coordinates of the feature points. On the one hand, this is based on the (over-simplified) assumption that the 2D images depict a 3D scene under \emph{orthographic} projections, and on the other hand such an extrinsic approach is not directly applicable to distances on non-Euclidean manifolds (e.g.~multi-shape matching with geodesic distances). 
In contrast, our approach is intrinsic due to the use of pairwise adjacency matrices, and thus can handle \emph{general pairwise} information independent of the structure of the ambient space. 

\paragraph{Synchronisation methods:}
Given pairwise matchings between pairs of objects in a collection, \emph{permutation synchronisation}  methods have the purpose to 
achieve \emph{cycle-consistency} in the set of pairwise matchings. For  
\begin{align}\label{eq:partPerm}
    \perm_{pq} = \{X \in \{0,1\}^{p \times q}~:~X\onevec_q \leq \onevec_p, \onevec_p^T X \leq \onevec_q^T\}
  \end{align}
  being the set of of $(p {\times} q)$-dimensional partial permutation matrices, we define cycle-consistency as follows:
\begin{definition} \label{def:cycleCons} (Cycle-consistency)\footnote{Note that matrix inequalities are understood in an elementwise sense.}\\
	Let $\mathcal{X} = \{X_{ij} \in \perm_{m_im_j}\}_{i,j=1}^k$ be the set of pairwise matchings in a collection of $k$ objects, where each $(m_i {\times} m_j)$-dimensional matrix $X_{ij} \in \perm_{m_im_j}$ encodes the matching between the $m_i$ points in object $i$ and the $m_j$ points in object $j$.
	The set $\mathcal{X}$ is said to be \emph{cycle-consistent} if for all $i,j,\ell \in [k] := \{1,\ldots,k\}$ it holds that:
	\begin{enumerate}[label=(\roman*),noitemsep,topsep=0pt,parsep=2pt,partopsep=2pt]
		\item $X_{ii} = \matI_{m_i}$ (identity matching),
		\item $X_{ij} = X_{ji}^T$ (symmetry), and
		\item $X_{ij} X_{j\ell} \leq X_{i\ell}$ (transitivity).
	\end{enumerate}
\end{definition}
For the case of \emph{full} permutation matrices,~i.e.~in \eqref{eq:partPerm} the inequalities become equalities and $p{=}q$, Pachauri et al. \cite{Pachauri:2013wx} have proposed a simple yet effective method to achieve cycle-consistency based on a spectral decomposition of the matrix of pairwise matchings. The authors of \cite{Shen:2016wx} provide an analysis of such spectral synchronisations. Earlier works have also considered an iterative refinement strategy to improve pairwise matchings \cite{Nguyen:2011eb}.

While the aforementioned synchronisation methods considered the case of full permutations, some authors have also addressed the synchronisation of partial matchings,~e.g. based on semidefinite programming \cite{Chen:2014uo}, alternating direction methods of multipliers~\cite{zhou2015multi}, or a spectral decomposition followed by k-means clustering \cite{Arrigoni:2017ut}. A spectral approach has also been presented in~\cite{Maset:YO8y6VRb}, which, however, merely improves given initial pairwise matchings without guaranteeing cycle-consistency.

Rather than explicitly modelling the cubic number of cycle-consistency constraints (cf.~Def.~\ref{def:cycleCons}), most permutation synchronisation methods leverage the fact that cycle-consistency can be characterised by using the notion of \emph{universe points}, as e.g. in~\cite{Tron:kUBrCZhd}: %
\begin{lem} \label{lem:cycleConsRef} (Cycle-consistency, universe points)\\
The set $\mathcal{X}$ of pairwise matchings is \emph{cycle-consistent} if there exists a collection 
	$\{X_{\ell} \in \perm_{m_{\ell}d}~:~X_{\ell}\onevec_d = \onevec_{m_{\ell}}\}_{\ell=1}^k$
 such that for each $X_{ij} \in \mathcal{X}$ it holds that $X_{ij} = X_i X_j^T$.
\end{lem}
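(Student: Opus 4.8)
The plan is to verify directly the three defining properties of cycle-consistency in Definition~\ref{def:cycleCons} from the assumed factorisation $X_{ij} = X_i X_j^T$. First I would record the structural consequences of the hypothesis $X_\ell \in \perm_{m_\ell d}$ with $X_\ell \onevec_d = \onevec_{m_\ell}$: the entries of $X_\ell$ lie in $\{0,1\}$, every row sums to exactly one, and (by the partial-permutation constraint) every column sums to at most one. Equivalently, each of the $m_\ell$ points of object $\ell$ is assigned to precisely one of the $d$ universe points, and no universe point receives more than one point of object $\ell$.

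From these two facts I would extract the two identities that drive the whole proof. Since distinct rows of $X_\ell$ carry their single nonzero entry in distinct columns — otherwise that column would have sum $\geq 2$ — the Gram matrix $X_\ell X_\ell^T$ has unit diagonal and vanishing off-diagonal, i.e. $X_\ell X_\ell^T = \matI_{m_\ell}$. On the other hand $X_\ell^T X_\ell = \diag(c)$, where $c \in \{0,1\}^d$ collects the column sums of $X_\ell$, so $X_\ell^T X_\ell \leq \matI_d$ elementwise while still having nonnegative entries.

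With these in hand the three conditions follow mechanically. Property~(i): $X_{ii} = X_i X_i^T = \matI_{m_i}$. Property~(ii): $X_{ji}^T = (X_j X_i^T)^T = X_i X_j^T = X_{ij}$. Property~(iii): $X_{ij} X_{j\ell} = X_i (X_j^T X_j) X_\ell^T$; since $X_i$ and $X_\ell^T$ are entrywise nonnegative and $X_j^T X_j \leq \matI_d$ elementwise, multiplying this inequality on the left by $X_i \geq 0$ and on the right by $X_\ell^T \geq 0$ preserves it, giving $X_i (X_j^T X_j) X_\ell^T \leq X_i \matI_d X_\ell^T = X_i X_\ell^T = X_{i\ell}$.

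The only step needing genuine care rather than being purely formal is transitivity~(iii): one must justify that elementwise matrix inequalities are preserved under left/right multiplication by entrywise-nonnegative matrices, and, upstream, pin down that $X_j^T X_j$ really is the diagonal matrix of column sums of $X_j$ and is therefore dominated by $\matI_d$. Everything else reduces to the two Gram-type identities for a partial permutation matrix with all-ones row sums. I would also note that the lemma is stated only as a sufficient condition, so no argument building the universe representation $\{X_\ell\}$ from a given cycle-consistent $\mathcal{X}$ is required here.
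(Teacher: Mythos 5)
Your proof is correct and follows essentially the same route as the paper's: verify (i)--(iii) of Definition~\ref{def:cycleCons} directly from the factorisation, using $X_\ell X_\ell^T = \matI_{m_\ell}$ (from unit row sums and the partial-permutation structure) and $X_j^T X_j \leq \matI_d$ for transitivity. You merely spell out two steps the paper leaves implicit, namely why the Gram identities hold and why elementwise inequalities survive multiplication by nonnegative matrices.
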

\begin{proof}
We show that conditions (i)-(iii) in Def.~\ref{def:cycleCons} are fulfilled. Let $i,j,\ell \in [k]$ be fixed.  (i) We have that $X_{i} \onevec_{d} = \onevec_{m_{i}}$ and 
$X_{i} \in \perm_{m_{i}d}$  imply that $X_i X_i^T = \matI_{m_i}$, so that $X_{ii} = X_iX_i^T = \matI_{m_i}$.
(ii) Moreover, $X_{ij} = X_i X_j^T$ means that $X_{ij}^T = (X_i X_j^T)^T = X_j X_i^T = X_{ji}$.
(iii) We have that %
$X_{j} \in \perm_{m_{j}d}$  implies 
$X_j^T X_j \leq \matI_{d}$.
We can write
\begin{align}
	X_{ij} X_{j\ell} &= (X_i X_j^T) (X_j X_{\ell}^T) \\
		&= X_i \underbrace{(X_j^T X_j)}_{\leq \matI_{d}} X_{\ell}^T 
		\leq X_i X_{\ell}^T  = X_{i\ell} \,. \qedhere
\end{align}

\end{proof}
Here, the  $(m_{\ell}{\times}d)$-dimensional \emph{object-to-universe} matching matrix $X_{\ell}$ assigns to each of the $m_{\ell}$ points of object $\ell$ exactly one of $d$ universe points---as such, all points among the $k$ objects that are assigned to a given (unique) universe point are said to be in correspondence. 

For notational brevity, it is convenient to consider a matrix formulation of Lemma~\ref{lem:cycleConsRef}. 
With $m_i$ being the number of points in the $i$-th object and $m = \sum_{i=1}^k m_i$, let $\mathbf{X}$ be the $(m {\times} m)$-dimensional pairwise matching matrix
\begin{align}
	\mathbf{X} = [X_{ij}]_{i,j=1}^k \in [\perm_{m_im_j}]_{i,j=1}^k %
\end{align}
and let
\begin{align}
	\mathcal{U} = \{ U  \in [\perm_{m_i d}]_{i}~:~ U \onevec_{d} = \onevec_m\} \subset \{0,1\}^{m \times d}\,.
\end{align}
Lemma~\ref{lem:cycleConsRef} translates into the requirement that there must be a $U \in \mathcal{U}$, such that
\begin{align}\label{eq:xuut}
	\mathbf{X} = UU^T\,.
\end{align}
With this matrix notation it becomes also apparent that one can achieve synchronisation by matrix \emph{factorisation}, such as pursued by the aforementioned spectral approaches \cite{Pachauri:2013wx,Shen:2016wx,Arrigoni:2017ut,Maset:YO8y6VRb}. While recently a lot of progress has been made for permutation synchronisation, one of the open problems is how to efficiently
integrate higher-order information to model geometric relations between points. We achieve this goal with our proposed method and demonstrate a significant improvement of the matching accuracy due to the additional use of geometric information. %

\paragraph{Power method:} 
The power method is one of the classical numerical linear algebra routines for computing the eigenvector corresponding to the largest (absolute) eigenvalue~\cite{GolubVanLoan:1996}.
 Moreover, it is well-known that the eigenvector corresponding to the largest (absolute) eigenvalue maximises the Rayleigh quotient $\frac{x^TAx}{x^Tx}$, which, up to scale, is equivalent to maximising the (not necessarily convex) quadrative objective $x^TAx$ over the unit sphere. In addition to computing a single eigenvector, straightforward extensions of the power method are the \emph{Orthogonal Iteration} and \emph{QR Iteration} methods~\cite{GolubVanLoan:1996}, which simultaneously compute multiple (orthogonal) eigenvectors and can be used to maximise quadratic objectives over the Stiefel manifold.

\paragraph{Power method generalisations:}  %
 Moreover, higher-order generalisations of the power method have been proposed, e.g.~for rank-1 tensor approximation~\cite{de1995higher}, or for multi-graph matching~\cite{Shi:2016tj}.
 However, the latter approach has a runtime complexity that is \emph{exponential} in the number of graphs and thus prevents scalability (e.g.~matching $12$ graphs, each with $10$ nodes, takes about $10$ minutes). In \cite{Chen:2016tx}, the authors propose a \emph{Projected Power Method} for the optimisation of quadratic functions over sets other than the Stiefel manifold, such as permutation matrices. 
 In a permutation synchronisations setting, their method obtains results that are comparable to semidefinite relaxations methods \cite{Chen:2014uo,Huang:2013uk} at a reduced runtime. However, due to the restriction to quadratic objective functions, their approach cannot handle geometric relations between points, as they would become polynomials of degree four, as will be explained in Sec.~\ref{sec:method}. Our method goes beyond the existing approaches as we propose a convergent  projected power iteration method for maximising a higher-order objective over the set $\mathcal{U}$. With that, we can incorporate geometric information between neighbouring points using a fourth-order polynomial, while always maintaining cycle-consistency. %

\section{Method}\label{sec:method}
The overall idea of our approach is to phrase the multi-matching problem as simultaneously solving $k^2$ pairwise matching problems with quadratic (second-order) matching scores, where the variables are weighted based on linear (first-order) matching scores. 
Instead of directly optimising over \emph{pairwise matchings}, we parametrise the pairwise matchings in terms of their \emph{object-to-universe} matchings,~cf.~Lemma~\ref{lem:cycleConsRef} and Eq.~\eqref{eq:xuut}.
Although this has the disadvantage that the quadratic term becomes quartic (a fourth-order polynomial), it has the strong advantages that (i) cycle-consistency is guaranteed to be always maintained, (ii) one only optimises for $m \times d$, rather than $m \times m$ variables in the pairwise case, where commonly $d \ll m$. %

\subsection{Multi-Matching Formulation}
In the following we present our multi-matching formulation.
Let $A_i \in \mathcal{S}_{m_i}^+$ denote the $(m_i {\times} m_i)$-dimensional symmetric and positive semidefinite \emph{adjacency matrix} of object $i$ (e.g.~a matrix that encodes the Gaussian of pairwise Euclidean/geodesic distances between pairs of points).
For a given $X_i \in \perm_{m_i d}$, the $d {\times} d$ matrix $X_i^T A_i X_i$ is a row/column reordering of the matrix $A_i$ according to the universe points.
Hence, we can use the Frobenius inner product $\langle X_i^T A_i X_i, X_j^T A_j X_j\rangle$  for quantifying how well two adjacency matrices $A_i$ and $A_j$ agree after they have been reordered according to the universe points based on $X_i$ and $X_j$. This term can be understood as the object-to-universe formulation of second-order matching terms when using pairwise matching matrices (analogous to the QAP in Koopmans-Beckmann form~\cite{Koopmans:1957gf}).  As such, multi-matching with geometric consistency can be phrased as
\begin{align}
  &\max_{X_1,\ldots,X_k} &&\sum_{i,j=1}^k  \langle X_i^T A_i X_i, X_j^T A_j X_j\rangle \nonumber \\
  & \quad\text{s.t} & & X_i \in \perm_{m_id}~\forall~i \in [k]\,. \label{eq:mgmpwonly}
\end{align}
Equivalently, for $U = [X_1^T, \ldots, X_k^T]^T \in \R^{m \times d}$ and $A \in \mathcal{S}_{m}^+$ being the (symmetric and positive semidefinite) block-diagonal \emph{multi-adjacency} matrix defined as $A = \diag(A_1,\ldots,A_k) \in \R^{m \times m}$, Problem~\eqref{eq:mgmpwonly} can be written in compact matrix form as
\begin{align}
  &\max_{U \in \mathcal{U}} && \trace(U^T A UU^T A U) \,. \label{eq:mgmmatrixpwonly}
\end{align}
Let $W_{ij} \in \R^{m_i \times m_j}_+$ encode the (non-negative) similarity scores between the points of object $i$ and $j$,
 and let $W = [W_{ij}]_{ij} \in \R^{m \times m}$ be a (symmetric) matrix that encodes all similarity scores. With that, we can define the reweighting of the object-to-universe matchings $U$ as $WU$, where both $U$ and $WU$ have the same dimensionality. The purpose of this reweighting is to amplify matchings in $U$ that have high similarity scores, cf.~Fig.~\ref{fig:reweighting}.
By replacing $U$ in Problem~\eqref{eq:mgmmatrixpwonly} with its reweighted matrix $WU$, we arrive at our final multi-matching formulation
\begin{align}
  &\max_{U \in \mathcal{U}} && \trace( U^T \overline{W} U U^T \overline{W} U) := f(U) \,, \label{eq:mgm}
\end{align}
for $\overline{W} := W^T A W$.
While~\eqref{eq:mgm} is based on the $U$-matrix and hence intrinsically guarantees cycle-consistent multi-matchings, the objective function is a fourth-order polynomial that is to be maximised over the (binary) set $\mathcal{U}$.
\begin{figure}%
     \centerline{  
        \includegraphics[scale=.22]{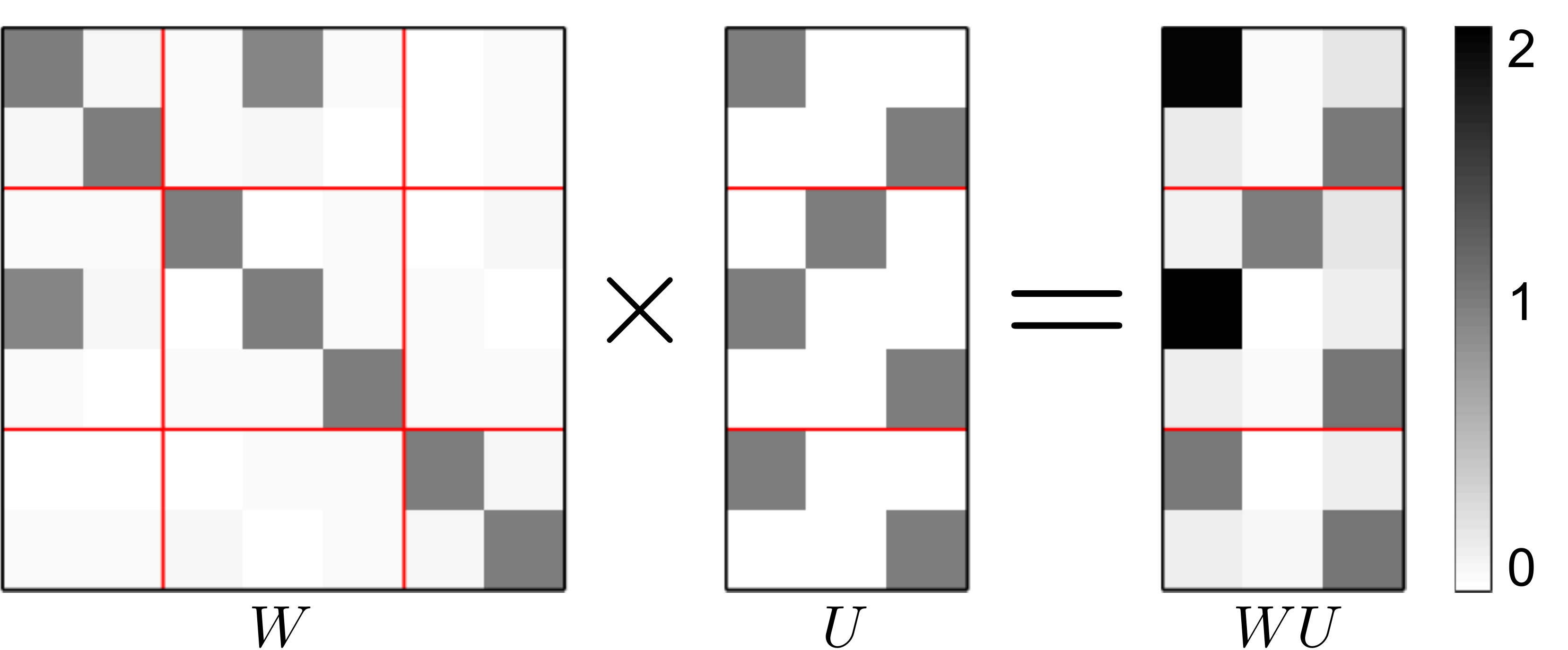}%
     }
    \caption{Effect of reweighting $U$ with $W$. The similarity matrix $W$ indicates that the first point in the first object is similar the second point in the second object (the grey values in $W$ at position $(1,4)$, and $(4,1)$). Hence, those elements in the first column of $U$that match these points to each other are amplified in $WU$.}
    \label{fig:reweighting}
\end{figure}

\subsection{Algorithm}
In order to solve Problem~\eqref{eq:mgm} we propose a higher-order projected power iteration (HiPPI) method, as outlined in Alg.~\ref{alg:hppi}. The main idea is to perform a higher-order power iteration step, followed by a projection onto the set $\mathcal{U}$. The approach is simple, and merely comprises of matrix multiplications and the Euclidean projection onto $\mathcal{U}$.
\begin{algorithm}\label{alg:hppi}
\SetKwInput{Input}{Input}
\SetKwInput{Output}{Output}
\SetKwInput{Initialise}{Initialise}
\SetKwRepeat{Do}{do}{while}
\DontPrintSemicolon

 \Input{similarities $W$, multi-adjacency matrix $A {\in} \mathcal{S}_m^+$}
 \Output{cycle-consistent multi-matching $U_t \in \mathcal{U}$}
\Initialise{$t \gets 0, U_0 \in \mathcal{U}, \overline{W} \gets W^T A W$}%

  \Repeat{ $|f(U_t) - f(U_{t{-}1}) | = 0$}{ \label{alg:line1} %
  	$V_t \gets \overline{W}U_tU_t^T\overline{W} U_t$ \label{alg:quartic}  \\
  	
  	$U_{t{+}1} \gets \operatorname{proj}_{\mathcal{U}}(V_t)$  \label{alg:proj}\\
    $t \gets t{+}1$
   }
 \caption{Higher-order Projected Power Iteration (HiPPI) algorithm.}
\end{algorithm}

\subsection{Theoretical Analysis}
\paragraph{Monotonicity:} We now show that the HiPPI algorithm monotonically increases the objective $f(U_t)$.
\begin{prop}\label{prop:mon}%
In Alg.~\ref{alg:hppi} it holds that $f(U_{t{+}1}) \geq f(U_t)$.
\end{prop}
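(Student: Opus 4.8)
The plan is to exploit the standard ``majorization'' structure behind projected power iterations: show that each step of HiPPI maximizes a linear surrogate that lower-bounds $f$, using convexity of $f$ as a function of the symmetric rank-$d$ factor $UU^T$. Concretely, write $f(U) = \langle \overline{W}UU^T\overline{W}, UU^T\rangle = \langle M(U), M(U)\rangle$ where $M(U) := \overline{W}^{1/2}UU^T\overline{W}^{1/2}$ is symmetric positive semidefinite (here I use that $\overline{W} = W^TAW$ is symmetric PSD, hence has a PSD square root). Since the squared Frobenius norm $g(S) = \langle S, S\rangle$ is convex in $S$, and $S \mapsto M(U)$ is affine in the variable $P := UU^T$, we get $f(U) = g(M(U)) \geq g(M(U_t)) + \langle \nabla g(M(U_t)), M(U) - M(U_t)\rangle$, i.e.
\begin{align}
  f(U) \;\geq\; 2\langle \overline{W}\,U_tU_t^T\,\overline{W},\; UU^T\rangle - f(U_t)\,. \label{eq:surrogate}
\end{align}
The right-hand side of \eqref{eq:surrogate} equals $2\langle V_t,\, U\rangle_{\!*} - f(U_t)$ once one massages the inner product: $\langle \overline{W}U_tU_t^T\overline{W}, UU^T\rangle = \langle \overline{W}U_tU_t^T\overline{W}U,\, U\rangle = \langle V_t, U\rangle$ with $V_t$ exactly the matrix computed on line~\ref{alg:quartic} of Alg.~\ref{alg:hppi} (using cyclicity of the trace and symmetry of $\overline{W}$).

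Next I would invoke the defining property of the projection step on line~\ref{alg:proj}: $U_{t+1} = \operatorname{proj}_{\mathcal{U}}(V_t) = \argmin_{U \in \mathcal{U}} \|U - V_t\|^2$, and expanding the square, since $\|U\|^2$ is constant over $\mathcal{U}$ (each row of $U\in\mathcal U$ has exactly one nonzero entry equal to $1$, so $\|U\|^2 = m$), this is equivalent to $U_{t+1} = \argmax_{U\in\mathcal U}\langle U, V_t\rangle$. Hence $\langle U_{t+1}, V_t\rangle \geq \langle U_t, V_t\rangle$. Plugging $U = U_{t+1}$ into \eqref{eq:surrogate} and then using this inequality together with the identity $\langle U_t, V_t\rangle = \langle \overline{W}U_tU_t^T\overline{W}, U_tU_t^T\rangle = f(U_t)$ yields
\begin{align}
  f(U_{t+1}) \;\geq\; 2\langle U_{t+1}, V_t\rangle - f(U_t) \;\geq\; 2\langle U_t, V_t\rangle - f(U_t) \;=\; 2f(U_t) - f(U_t) \;=\; f(U_t)\,,
\end{align}
which is the claim.

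I expect the main obstacle to be pinning down the convexity/tangent-line step \eqref{eq:surrogate} cleanly: one must be careful that $f$ is being viewed as a convex function of the matrix $UU^T$ (or of $M(U)$) rather than of $U$ itself — as a function of $U$, $f$ is a quartic and not convex — and that the linearization is taken in the right variable so that the resulting linear functional in $U$ matches precisely the quantity $\langle V_t, U\rangle$ that the projection step optimizes. A secondary point to verify carefully is that $\|U\|^2$ is indeed constant on $\mathcal{U}$ (so that Euclidean projection coincides with maximizing the linear term $\langle U,V_t\rangle$); this holds because every $U\in\mathcal U$ is row-stochastic with $0/1$ entries, hence $\langle U,U\rangle = \onevec_m^T U \onevec_d = \onevec_m^T\onevec_m = m$. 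Everything else is routine manipulation of Frobenius inner products and trace cyclicity, together with the PSD-ness of $\overline W$ which is guaranteed by the assumptions $A \in \mathcal{S}_m^+$ and the construction $\overline W = W^TAW$.
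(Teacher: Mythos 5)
Your overall strategy is sound and, once repaired, is essentially the paper's argument in different clothing, but there is a genuine error in the middle. The outer convexity step is fine: since $\overline{W}\succeq 0$, the map $P\mapsto\trace(\overline{W}P\overline{W}P)$ is convex on symmetric matrices, so with $P=UU^T$ you correctly obtain the surrogate $f(U)\geq 2\langle\overline{W}U_tU_t^T\overline{W},\,UU^T\rangle - f(U_t)$. The problem is the next claim, that $\langle\overline{W}U_tU_t^T\overline{W},\,UU^T\rangle=\langle V_t,U\rangle$. Writing $Z:=\overline{W}U_tU_t^T\overline{W}$, the left-hand side equals $\trace(U^TZU)$, which is \emph{quadratic} in $U$, whereas $\langle V_t,U\rangle=\trace(U^TZU_t)$ is \emph{linear} in $U$; the two coincide only at $U=U_t$. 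You have silently replaced one copy of $U$ by $U_t$, so the key inequality $f(U_{t+1})\geq 2\langle V_t,U_{t+1}\rangle-f(U_t)$ is not established and the final chain does not go through as written.

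The gap is repairable with exactly one more minorisation, which is the step the paper supplies via DC programming. Since $Z\succeq 0$, the quadratic $\trace(U^TZU)$ is itself convex in $U$, hence $\trace(U^TZU)\geq 2\trace(U^TZU_t)-\trace(U_t^TZU_t)=2\langle V_t,U\rangle-f(U_t)$. Chaining this with your surrogate gives $f(U)\geq 4\langle V_t,U\rangle-3f(U_t)$, and the projection step (your observation that $\langle U,U\rangle\equiv m$ on $\mathcal{U}$ is correct, so $U_{t+1}$ maximises $\langle V_t,\cdot\rangle$ over $\mathcal{U}$ and $\langle V_t,U_{t+1}\rangle\geq\langle V_t,U_t\rangle=f(U_t)$) then yields $f(U_{t+1})\geq 4f(U_t)-3f(U_t)=f(U_t)$. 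Structurally this repaired proof matches the paper's: your outer convexity step plays the role of Lemma~\ref{lem:nondec} (which the paper proves via $0\leq\|LUU^TL^T-LVV^TL^T\|^2$, i.e.\ the same convexity of the squared Frobenius norm), while the inner linearisation of $\trace(U^TZU)$ over $\operatorname{conv}(\mathcal{U})$ is precisely the DC-programming descent step you omitted.
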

\noindent Before we prove Prop.~\ref{prop:mon}, we state the following result:
\begin{lem}\label{lem:nondec}
  Let $W$ be a symmetric and positive semidefinite matrix. For $\trace(V^TWVV^TWV) \leq \trace(U^TWVV^TWU)$ we have $\trace(U^TWVV^TWU) \leq \trace(U^TWUU^TWU)$.
 \end{lem}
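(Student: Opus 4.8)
The plan is to read the hypothesis as a Cauchy--Schwarz-type inequality in a suitable inner product and then apply Cauchy--Schwarz a second time to close the chain. Concretely, since $W$ is symmetric positive semidefinite, it has a symmetric square root $W^{1/2}$, and I would rewrite the three traces in terms of $P := W^{1/2}U$ and $Q := W^{1/2}V$. Note $\trace(U^TWVV^TWU) = \trace\bigl((W^{1/2}U)^T(W^{1/2}V)(W^{1/2}V)^T(W^{1/2}U)\bigr) = \|P^TQ\|_F^2$ — wait, more carefully: $U^TWV = U^T W^{1/2}W^{1/2}V = P^TQ$, so $\trace(U^TWVV^TWU) = \trace(P^TQ Q^TP) = \|QP^T\|_F^2 = \|PQ^T\|_F^2$. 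Hmm, let me instead track it as $\trace((P^TQ)(P^TQ)^T) = \|P^TQ\|_F^2$. Similarly $\trace(V^TWVV^TWV) = \|Q^TQ\|_F^2$ and $\trace(U^TWUU^TWU) = \|P^TP\|_F^2$. So the hypothesis becomes $\|Q^TQ\|_F^2 \le \|P^TQ\|_F^2$ and the desired conclusion is $\|P^TQ\|_F^2 \le \|P^TP\|_F^2$.

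Now I would apply Cauchy--Schwarz for the Frobenius inner product to the matrices $P^TQ$ and $Q^TQ$... no: the clean route is to bound $\|P^TQ\|_F^2$ directly. Write $\|P^TQ\|_F^2 = \langle P^TQ, P^TQ\rangle = \langle Q^TP, Q^TP \rangle$. Consider instead the inner product $\langle P^TP,\, Q^TQ\rangle_F = \trace(P^TP Q^TQ) = \trace(P^TP\,Q^TQ)$, which equals $\|PQ^T\|_F^2 \cdot$?? Let me be cleaner: $\trace(P^TP Q^TQ) = \trace(Q Q^T P P^T)$ — hmm this is $\|Q^TP\|_F^2$ only if things commute suitably. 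Actually $\trace(P^TPQ^TQ)$ is not generally $\|P^TQ\|_F^2$. The correct identity is $\|P^TQ\|_F^2 = \trace(Q^TPP^TQ) = \trace(PP^T QQ^T) = \langle PP^T, QQ^T\rangle_F$. So set $S := PP^T = W^{1/2}UU^TW^{1/2} \succeq 0$ and $T := QQ^T = W^{1/2}VV^TW^{1/2}\succeq 0$. Then the hypothesis is $\|T\|_F^2 = \langle T,T\rangle \le \langle S,T\rangle$ and the goal is $\langle S,T\rangle \le \langle S,S\rangle = \|S\|_F^2$. By Cauchy--Schwarz on the Frobenius inner product (valid since $S,T$ are real symmetric), $\langle S,T\rangle \le \|S\|_F\,\|T\|_F$. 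Combining with the hypothesis $\|T\|_F^2 \le \langle S,T\rangle \le \|S\|_F\|T\|_F$ gives $\|T\|_F \le \|S\|_F$ (the case $\|T\|_F=0$ being trivial since then $\langle S,T\rangle=0=\|T\|_F^2$ and the conclusion $0\le\|S\|_F^2$ is immediate). Hence $\langle S,T\rangle \le \|S\|_F\|T\|_F \le \|S\|_F^2 = \langle S,S\rangle$, which is exactly the conclusion.

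I do not anticipate a genuine obstacle here; the only thing to be careful about is making the algebraic reduction to $S$ and $T$ precise — in particular using $W = W^{1/2}W^{1/2}$ with $W^{1/2}$ symmetric, and the cyclic/transpose identities $\trace(U^TWVV^TWU) = \langle S, T\rangle$, $\trace(U^TWUU^TWU) = \langle S, S\rangle$, $\trace(V^TWVV^TWV) = \langle T, T\rangle$ — and then the whole statement is just: if $\langle T,T\rangle \le \langle S,T\rangle$ for PSD (indeed arbitrary symmetric) $S,T$, then $\langle S,T\rangle \le \langle S,S\rangle$, which follows from one application of Cauchy--Schwarz. If one prefers to avoid square roots, an alternative is to verify the two trace identities above directly via cyclicity of the trace and then invoke Cauchy--Schwarz on the symmetric PSD matrices $AUU^T$-type products; but the square-root substitution is the most transparent.
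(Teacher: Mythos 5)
Your proof is correct, and its core reduction is the same as the paper's: both factor $W$ (you via the symmetric square root $W^{1/2}$, the paper via $W=L^TL$) and recognize the three traces as $\langle T,T\rangle_F$, $\langle S,T\rangle_F$, $\langle S,S\rangle_F$ for the positive semidefinite matrices $S=W^{1/2}UU^TW^{1/2}$ and $T=W^{1/2}VV^TW^{1/2}$. The only divergence is the closing step. The paper expands $0\le\|S-T\|_F^2=\bigl(\langle S,S\rangle-\langle S,T\rangle\bigr)+\bigl(\langle T,T\rangle-\langle S,T\rangle\bigr)$ and observes that the second bracket is nonpositive by hypothesis, which forces the first to be nonnegative---one line, no case distinction. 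You instead apply Cauchy--Schwarz to get $\langle S,T\rangle\le\|S\|_F\|T\|_F$, combine with the hypothesis to deduce $\|T\|_F\le\|S\|_F$ (handling $\|T\|_F=0$ separately), and chain to the conclusion. Both arguments are valid; the paper's is marginally more economical since it needs only the nonnegativity of a squared norm rather than Cauchy--Schwarz plus a division, but the difference is cosmetic.
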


   \begin{proof}
    With $W$ being positive semidefinite, we can factorise it as $W=L^TL$. We have
    \begin{align}
      0& \leq \|LUU^TL^T - LVV^TL^T\|^2 \\
      &= \trace(LUU^TL^TLUU^TL^T) + \trace(LVV^TL^TLVV^TL^T)  \nonumber\\
      &\qquad - 2\trace(LUU^TL^T LVV^TL^T) \\
      &= \trace(U^TWUU^TWU)  {-} \trace(U^TWVV^TWU)  \\
      &\qquad\underbrace{- \trace(U^TWVV^TWU) + \trace(V^TWVV^TWV)}_{\leq 0 \text{ by assumption}} \nonumber \\
      \Rightarrow& \trace(U^TWUU^TWU)  {-} \trace(U^TWVV^TWU) \geq 0 \,.\qedhere
    \end{align}
   \end{proof}

\begin{proof}[Proof of Prop.~\ref{prop:mon}]
We first look at a simpler variant of Problem~\eqref{eq:mgm}, where for a fixed $Y$ we fix the inner term $UU^T$ in $f$ to $YY^T$, define $Z:= \overline{W} YY^T \overline{W}$, and relax $\mathcal{U}$ to its convex hull $\mathcal{C} := \operatorname{conv}(\mathcal{U})$, so that we obtain the problem
\begin{align}
  \max_{U \in \mathcal{C}}  ~\trace( U^T Z U) 
  ~~\Leftrightarrow~~ \min_{U} ~ \underbrace{\iota_{\mathcal{C}}(U)}_{g(U)} - \underbrace{\trace( U^T Z U)}_{h(U)} \,. \label{eq:dc}
\end{align}
Here, $\iota_{\mathcal{C}}(U)$ is the (convex) indicator function of the set $\mathcal{C}$ and $\trace( U^T Z U)$ is a convex quadratic function. We can see that~\eqref{eq:dc} is in the form of a difference of convex functions and can thereby be tackled based on DC programming, which for a given initial $U_0$ repeatedly applies the following update rules (see~\cite{le2018dc}): %
\begin{align}
      V_t &= \nabla_{U} h(U_t) = 2Z U_t \\
      U_{t{+}1} &= \argmin_U ~g(U) - \langle U, V_t \rangle \\
                &= \argmin_{U \in \mathcal{C}}  ~{-}\langle U, V_t \rangle \,.
\end{align}
As the maximum of a linear objective over the  compact convex set $\mathcal{C}$ is attained at its extreme points $\mathcal{U}$, 
we get
\begin{align}
                U_{t{+}1} &= \argmin_{U \in \mathcal{U}}  ~{-}\langle U, V_t \rangle \label{eq:projislap}\\
                &= \argmin_{U \in \mathcal{U}} \|V_t - U \|^2_F = \operatorname{proj}_{\mathcal{U}}(V_t)\,,
  \end{align}
  where for the latter we used that $\langle U,U \rangle {=} m$ (since any $U \in \mathcal{U}$ is a binary matrix that has exactly a single element in each row that is $1$). Based on the descent properties of DC programming (i.e.~the sequence $(g(U_t)-h(U_t))_{t=0,1,\ldots}$ is decreasing, cf.~\cite{le2018dc}, and thus $(h(U_t))_{t=0,1,\ldots}$ is increasing), so far we have seen that when applying the update $U_{t{+}1} = \operatorname{proj}_{\mathcal{U}}(\overline{W} YY^T \overline{W}U_t)$ we get that
  \begin{align}
   \trace(U_t^T \overline{W} YY^T \overline{W} U_t) \leq \trace(U_{t{+}1}^T \overline{W} YY^T \overline{W}  U_{t{+}1}) \,.
   \end{align}
   In particular, this also holds for the choice $Y:= U_t$, i.e.
   \begin{align}
   f(U_t) &= \trace(U_t^T \overline{W} U_tU_t^T \overline{W} U_t) \\
          &\leq \trace(U_{t{+}1}^T \overline{W} U_tU_t^T \overline{W}  U_{t{+}1}) \,.
  \end{align}
  Since $A$ is positive semidefinite by assumption, $\overline{W} = W^TAW$ is also positive semidefinite, and therefore we can apply Lemma~\ref{lem:nondec} to get
  \begin{align}
   f(U_t) &\leq\trace(U_{t{+}1}^T \overline{W} U_tU_t^T \overline{W}  U_{t{+}1})  \\
    &\leq\trace(U_{t{+}1}^T \overline{W} U_{t{+}1}U_{t{+}1}^T \overline{W}  U_{t{+}1}) = f(U_{t{+}1})\,. \qedhere
  \end{align}

\end{proof}

\paragraph{Convergence:}
For the sake of completeness, we also provide the following corollary.
\begin{cor}The sequence $(f(U_t))_{t=0,1,\ldots}$ produced by Alg.~\ref{alg:hppi} converges after a finite number of iterations.
\end{cor}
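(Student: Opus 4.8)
The plan is to combine the monotonicity result of Prop.~\ref{prop:mon} with the fact that the iterates $U_t$ live in a finite set. First I would observe that by construction every iterate satisfies $U_t \in \mathcal{U}$, since line~\ref{alg:proj} of Alg.~\ref{alg:hppi} applies $\operatorname{proj}_{\mathcal{U}}$, and $\mathcal{U}$ is a finite set: each $U \in \mathcal{U}$ is a binary $(m {\times} d)$-matrix, so $|\mathcal{U}| \leq 2^{md} < \infty$. Consequently the objective $f$ takes only finitely many distinct values on $\mathcal{U}$, i.e.\ $\{f(U) : U \in \mathcal{U}\}$ is a finite subset of $\R$.

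Next I would invoke Prop.~\ref{prop:mon}, which gives $f(U_{t+1}) \geq f(U_t)$ for all $t$, so $(f(U_t))_{t=0,1,\ldots}$ is a nondecreasing sequence that is bounded above (by $\max_{U \in \mathcal{U}} f(U)$, which exists since $\mathcal{U}$ is finite). A nondecreasing real sequence taking values in a finite set must be eventually constant: there are only finitely many values it can attain, and once it reaches any value it can never go below it, so after at most $|\{f(U) : U \in \mathcal{U}\}|$ strict increases it stabilises. Hence there exists $t^\star$ such that $f(U_t) = f(U_{t^\star})$ for all $t \geq t^\star$, which is precisely the assertion that the sequence converges after finitely many iterations. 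This also shows the stopping criterion $|f(U_t) - f(U_{t-1})| = 0$ in line~\ref{alg:line1} is triggered in finite time.

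There is essentially no hard part here; the only point requiring a little care is the direction of the argument — finiteness of the \emph{value set} of $f$ on $\mathcal{U}$ (not finiteness of the iterate sequence itself, which need not terminate with a repeated matrix, only with a repeated objective value) is what forces the nondecreasing sequence to stabilise. One should also note that this corollary claims convergence of the objective values, not of the matrices $U_t$ themselves; the latter could in principle cycle among configurations all sharing the same optimal-within-the-run value, so no stronger statement is asserted or needed.
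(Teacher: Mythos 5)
Your proof is correct and follows essentially the same route as the paper: feasibility of the iterates, finiteness of $\mathcal{U}$, and monotonicity from Prop.~\ref{prop:mon} together force the nondecreasing sequence of objective values to become eventually constant. Your phrasing via finiteness of the value set $\{f(U) : U \in \mathcal{U}\}$ is a slightly more direct packaging of the paper's "bounded and increasing, hence convergent; discrete, hence eventually constant" argument, but the substance is identical.
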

\begin{proof}
Since $\mathcal{U}$ is a finite set, $f(U)$ is  bounded above for any $U \in \mathcal{U}$. Moreover, since for any $t \geq 0$ we have that $U_t \in \mathcal{U}$ (feasibility), the sequence $(f(U_t))_{t=0,1,\ldots}$ produced by Alg.~\ref{alg:hppi} is bounded and increasing (Prop.~\ref{prop:mon}), and hence convergent. Since the $U_t$ are discrete, convergence implies that there exists a $t_0 \in \N$ such that for all $ t \geq t_0$ it holds that $f(U_t) = f(U_{t_0})$. %
\end{proof}

\begin{remark} 
  As the order of the universe points can be arbitrary, there is a whole family of equivalent solutions: for any $U' = UP$ for $P$ being a $d {\times} d$ permutation, we have that $f(U) = f(U')$, since $U'(U')^T = UPP^TU^T = UU^T$. 
\end{remark}

\paragraph{Complexity analysis:}
The update rule in Alg.~\ref{alg:hppi} can be written as $U_{t{+}1} = \operatorname{proj}_{\mathcal{U}}(\overline{W} U_tU_t^T \overline{W}U_t)$. The matrix multiplications for computing $\overline{W} U_tU_t^T \overline{W}U_t$ have time complexity $\mathcal{O}(m^2d)$.
As can be seen in the proof of Prop.~\ref{prop:mon}, the projection onto $\mathcal{U}$ is a linear programming problem. Finding its optimiser amounts to solving $k$ individual (partial) linear assignment problems, each having sub-cubic empirical average time complexity when using the Auction algorithm~\cite{Bertsekas:1998vt,Schwartz:1994db}. %
Hence, the overall (average) per-iteration complexity is $\mathcal{O}(m^2d + k d^2\log(d))$. The memory complexity is $\mathcal{O}(m^2)$ due to the matrix $W \in \R^{m \times m}$, which can be improved by considering \emph{sparse} similarity scores.

\newcommand{\figScale}{.75}
\begin{figure*}%
     \centerline{  
        \subfigure{\includegraphics[scale=\figScale]{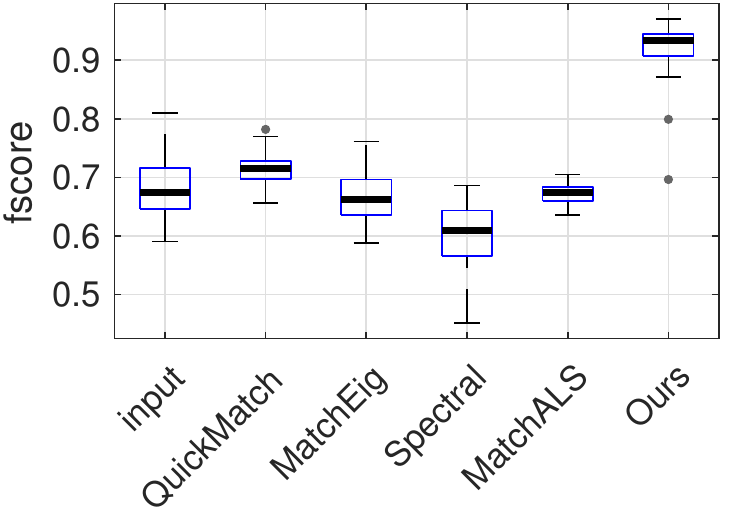}}%
        \hfil
        \subfigure{\includegraphics[scale=\figScale]{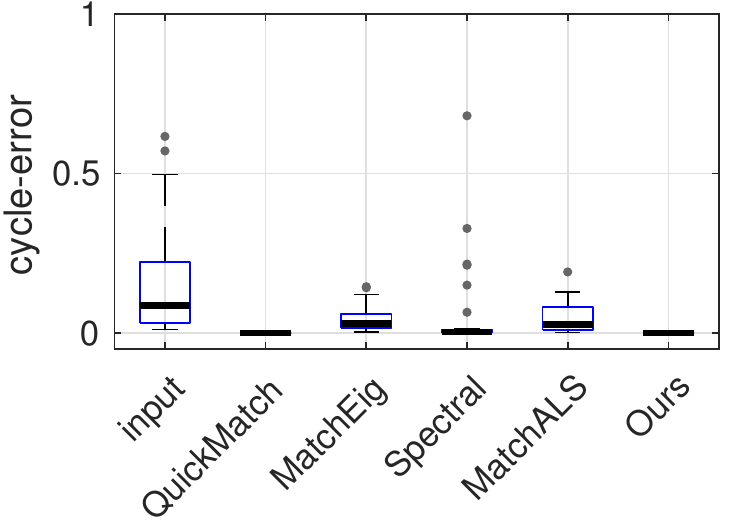}}%
        \hfil
        \subfigure{\includegraphics[scale=\figScale]{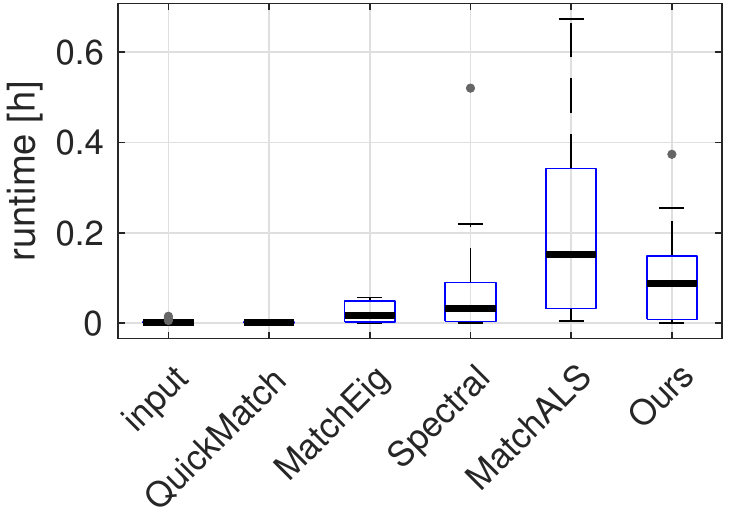}}%
     }
     \vspace{-1mm}
    \caption{Multi-image matching on the HiPPI dataset. Our method is the only one that considers geometric relations between points, while at the same time being scalable to such large datasets (note that the reported runtimes include the initialisation). We clearly outperform other matching methods as well as synchronisation methods in terms of the fscore (higher is better), while also guaranteeing cycle-consistency. } 
    \label{fig:resultsJohanMim}
\end{figure*}

\section{Experimental Results}\label{sec:experiments}

In this section we extensively compare our method to other approaches on three datasets. To be more specific, we consider two multi-image matching datasets, Willow~\cite{cho2013} and HiPPI, as well as the multi-shape matching dataset Tosca~\cite{bronstein2008numerical}. The datasets are summarised in Table~\ref{table:datasets}.
\begin{table}[h!]
\vspace{-2mm}
\begin{tabular}{@{}lllllll@{}}
\toprule
\textbf{Dataset} & \textbf{Type}   & \textbf{Bij.} & \textbf{\#} & $\mathbf{k}$ & $\mathbf{m}$  \\ \midrule
HiPPI   & images & no & $31$                   &   $[7{,}100]$  &  $[2257{,}20703]$    \\
Willow  & images & yes & $5$                   &   $[40{,}108]$  &  $[400{,}1080]$        \\
Tosca   & shapes & no & $7$                   &   $[6{,}20]$  &     $[3000{,}10000]$    \\ \bottomrule
\end{tabular}
\caption{Overview of datasets. ``Bij.'' indicates whether the matchings are bijective, and ``\#'' is the number of instances per dataset.
 }
\label{table:datasets}
\vspace{-2mm}
\end{table}

\paragraph{Similarity scores:} The similarity scores between image/shape $i$ and $j$ is encoded in the matrix $W_{ij} \in \R^{m_i \times m_j}_+$, which is defined using \emph{feature matrices} $F_i \in \R^{m_i \times f}$ and $F_j \in \R^{m_j \times f}$ of the respective image/shape, where $f$ is the feature dimensionality. As in~\cite{Tron:kUBrCZhd}, the similarity scores are based on a weighted Gaussian kernel, i.e.
\begin{align}
	[W_{ij}]_{pq} = \omega_{pq} \exp(- \frac{\|[F_i]_{p} - [F_j]_q\|^2}{2\sigma^2})\,,
\end{align}
where $\omega_{pq}$ is a weight that depends on the distance between the features and the closest descriptor from the same image. For details we refer the reader to~\cite{Tron:kUBrCZhd}.
The particular choice of features for each dataset are described below.

\paragraph{Adjacency matrices:} The adjacency matrix $A_i \in \mathcal{S}_{m_i}^+$ of image/shape $i$ is based on Euclidean distances between pairs of 2D image point locations in the case of multi-image matching (or geodesic distances between pairs of points on the 3D shape surface for multi-shape matching). By denoting the distance between the points with indices $p {\in} [m_i]$ and $q {\in} [m_i]$ as $d_{pq}$, the elements of the adjacency matrix are based on a Gaussian kernel, so that
	$[A_i]_{pq} {=} \exp(- \frac{d_{pq}^2}{2\mu\sigma_A^2})$.
We set $\sigma_A {=} \operatorname{median}(d_{\min})$, where $[d_{\min}]_p {=} \min_{q\neq p} d_{pq}$ for $p\in[m_i]$, and $\mu$ is a scaling factor.

\begin{table*}[!h!t]%
    \begin{center}%
      \footnotesize{%
      \setlength\tabcolsep{10pt}%
      \begin{tabular}{@{}llllllll@{}}
\toprule
\textbf{Instance} & \textsc{Pairwise} & \textsc{Spectral} & \textsc{MatchALS} & \textsc{QuickMatch} & \textsc{Yan et al.} & \textsc{Wang et al.} & \textsc{Ours}\\ \midrule
Car & $0.54~(0.4s)$ & $0.65~(0.4s)$ & $0.61~(4.2s)$ & $0.24~(0.2s)$ & $0.57~(21.5s)$ & $0.71~(2.9s)$ & $\mathbf{0.74}~(0.6s)$ \\
Duck & $0.48~(0.6s)$ & $0.66~(0.6s)$ & $0.62~(3.8s)$ & $0.23~(0.1s)$ & $0.54~(20.5s)$ & $0.82~(3.8s)$ & $\mathbf{0.88}~(0.8s)$ \\
Face & $0.94~(3.2s)$ & $0.98~(3.3s)$ & $0.96~(6.3s)$ & $0.90~(0.2s)$ & $0.85~(13.7s)$ & $0.96~(4.5s)$ & $\mathbf{1.00}~(3.4s)$ \\
Motorbike & $0.33~(0.4s)$ & $0.37~(0.5s)$ & $0.32~(1.8s)$ & $0.13~(0.2s)$ & $0.32~(31.8s)$ & $0.65~(8.2s)$ & $\mathbf{0.84}~(0.8s)$ \\
Winebottle & $0.62~(1.1s)$ & $0.82~(1.2s)$ & $0.77~(4.2s)$ & $0.24~(0.1s)$ & $0.71~(18.0s)$ & $0.88~(3.9s)$ & $\mathbf{0.95}~(1.4s)$ \\
\bottomrule
\end{tabular}
      }%
    \end{center}%
    \vspace{-3mm}%
    \caption{Fscores (higher is better) and runtimes for the Willow dataset. All methods are initialised based on pairwise matches with linear costs (except \textsc{QuickMatch}~\cite{Tron:kUBrCZhd}, which is initialisation-free). \textsc{Ours} ($\mu{=}{10}$), \textsc{Yan et al.}~\cite{Yan:2016vf}, and \textsc{Wang et al.}~\cite{Wang:2017ub} consider geometric relations between points, amongst which  \textsc{Ours} is the fastest (note that the reported runtimes include the initialisation). 
    }
    \label{table:willow}
\end{table*}

\paragraph{Quantitative scores:} For the quantitative evaluation we consider the $\text{fscore} = 2{\cdot}\frac{\text{precision} \cdot \text{recall}}{\text{precision} + \text{recall}}$, and the cycle-error, which is given by the fraction of the total number of cycle-consistency violations in all three-cycles, divided by the total number of matchings in all three-cycles.

\subsection{HiPPI Dataset} 
In this experiment we compare various multi-image matching methods. 

\paragraph{Dataset:} The HiPPI dataset %
comprises $31$ multi-image matching problems. For each problem instance, a (short) video sequence has been recorded (with resolution~$1920 {\times} 1080$, frame-rate~$30$ FPS, and duration~${>}5$s).
In each video, feature points and feature descriptors were extracted using SURF~\cite{Bay:2008jv} with three octaves. 
To obtain ground truth matchings, these feature points were tracked across the sequence based on their geometric distance and feature descriptor similarity. 
To ensure \emph{reliable} ground truth matchings, we have conducted the following three steps: (i) obvious wrong matchings between consecutive images have been automatically pruned,
(ii) we have manually removed those features that were incorrectly tracked from the first to the last frame (by inspecting the first and last frame), and 
(iii) in order to prevent feature sliding in-between the first and last frame, we manually inspected the flow of each remaining feature point and removed wrongly tracked points. Note that steps (ii) and (iii) have been performed by two different persons, which took in total about $24$ hours.
A multi-matching problem was then created by extracting evenly spaced frames from a sequence, where in each frame we added a significant amount of outlier points (randomly selected from the previously pruned points, where the number of points is chosen such that in each frame $50\%$ of the points are outliers), and we simulate occlusions (a rectangle of size $0.2 {\times} 0.2$ of the image dimensions) in order to get difficult partial multi-matching problems. 

\paragraph{Multi-image matching:} %
We compare our method with \textsc{QuickMatch} \cite{Tron:kUBrCZhd}, \textsc{MatchEig} \cite{Maset:YO8y6VRb}, \textsc{Spectral} \cite{Pachauri:2013wx} (implemented by the authors of \cite{zhou2015multi} for \emph{partial} permutation synchronisation), and \textsc{MatchALS} \cite{zhou2015multi}. 
Other methods that incorporate geometric information~\cite{Yan:2015vc,Yan:2016vf,Wang:2017ub} do not scale to such large problem instances, see Fig.~\ref{fig:teaser}, and thus we cannot compare to them.
The universe size $d$ is set to twice the average of the number of points per image~\cite{Maset:YO8y6VRb}.
We used \textsc{QuickMatch} ($\rho_{\text{den}} {=} 0.7$) for initialising $U_0$ in our method, and we set $\mu {=} 1$. The results are shown in Fig.~\ref{fig:resultsJohanMim}, where it can be seen that our method achieves a superior matching quality. In contrast to other methods (except \textsc{QuickMatch}), our method guarantees cycle-consistency.

\newcommand{\figScaleC}{.5}

\begin{figure*}[t!p!]
\centerline{\includegraphics[scale=\figScaleC]{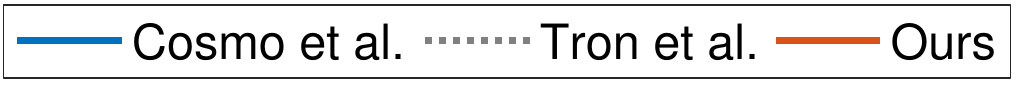}}
\vspace{-2mm}
     \centerline{ 
        \subfigure{\includegraphics[scale=\figScaleC]{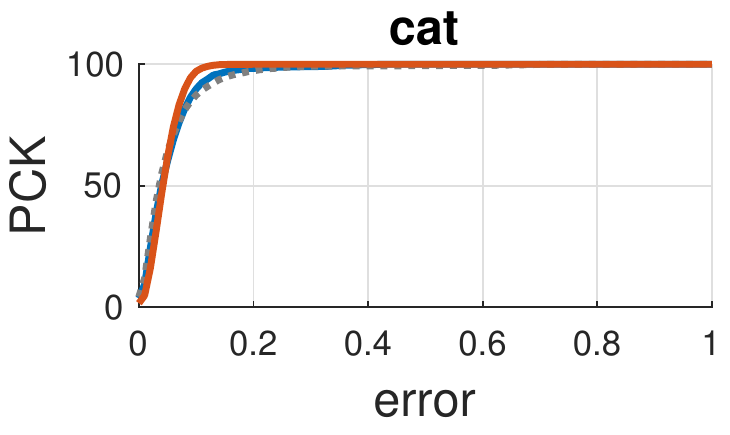}} \hfil
        \subfigure{\includegraphics[scale=\figScaleC]{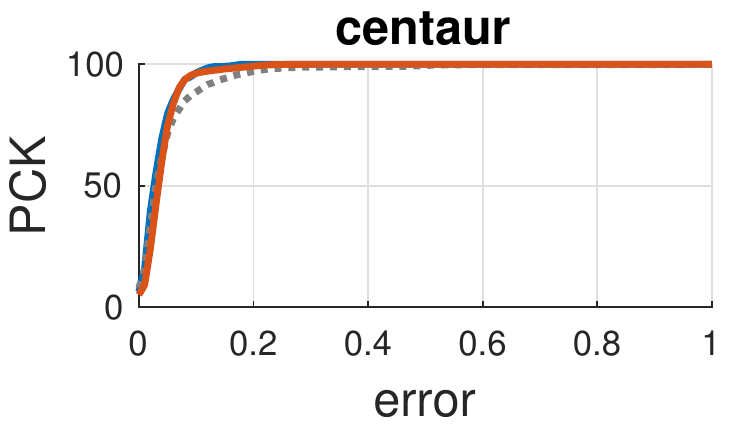}} \hfil
        \subfigure{\includegraphics[scale=\figScaleC]{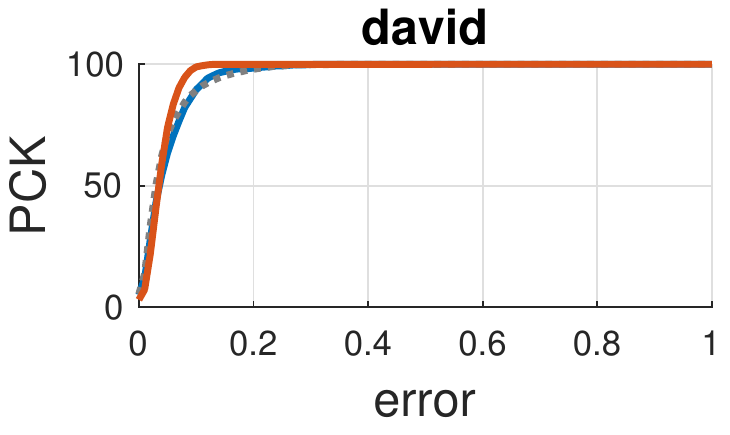}} \hfil
        \subfigure{\includegraphics[scale=\figScaleC]{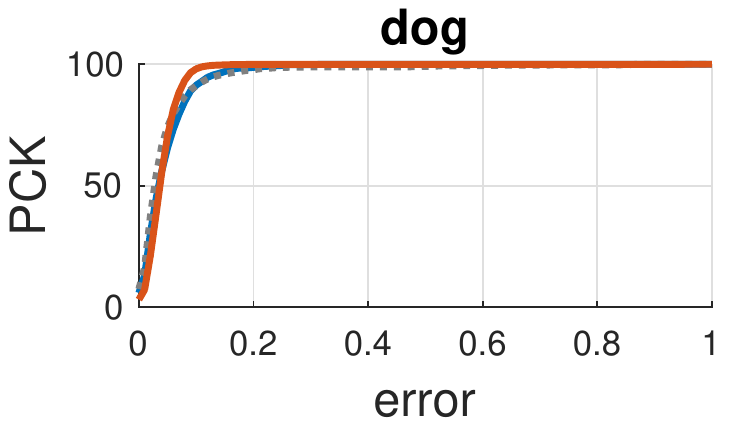}} 
        }
        \vspace{-3mm}
    \centerline{
        \subfigure{\includegraphics[scale=\figScaleC]{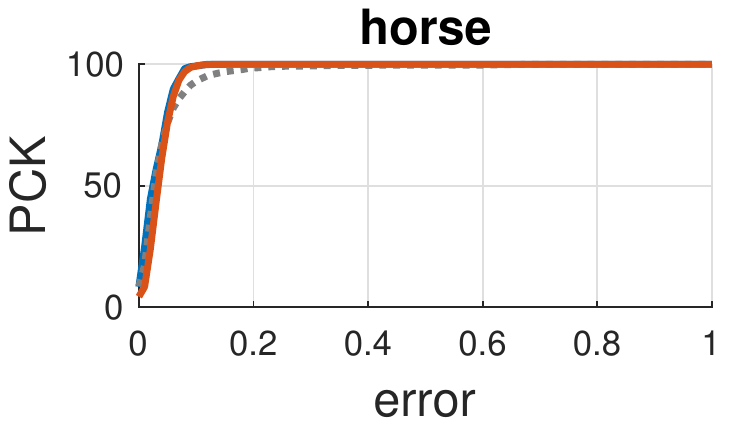}} \hfil
        \subfigure{\includegraphics[scale=\figScaleC]{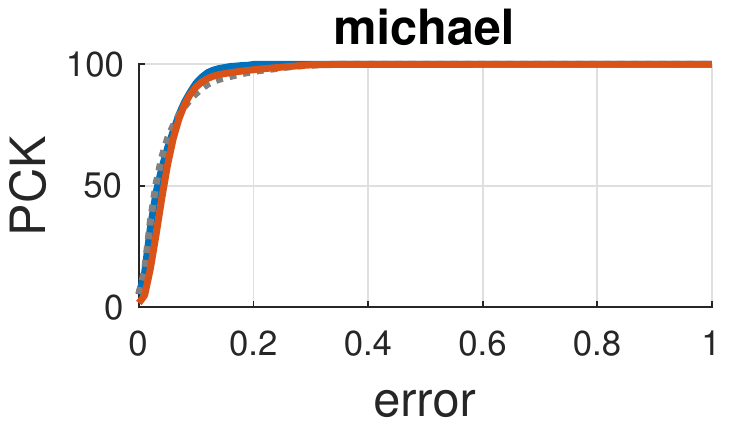}} \hfil        
        \subfigure{\includegraphics[scale=\figScaleC]{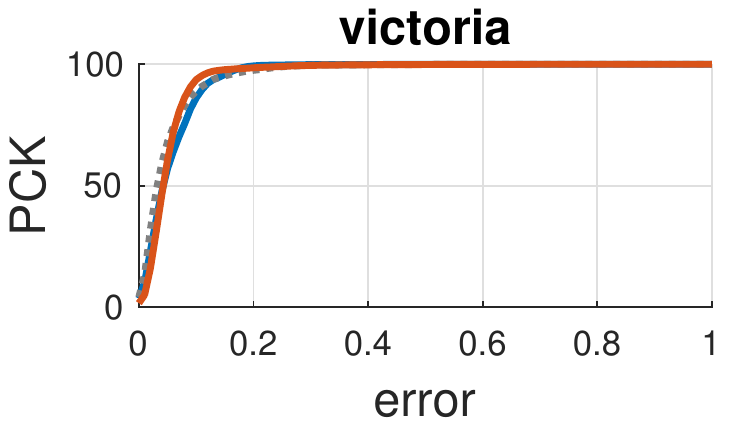}} \hfil
        \begin{subtable}{~~~~~~\scriptsize\begin{tabular}[b]{@{}lrr@{}}
\toprule
\#matches &\textbf{\cite{cosmo2017consistent}} &\textbf{Ours}\\  
 \midrule 
cat & $88$ & $\mathbf{518}$ \\
centaur & $51$ & $\mathbf{551}$ \\
david & $136$ & $\mathbf{515}$ \\
dog & $137$ & $\mathbf{582}$ \\
horse & $21$ & $\mathbf{515}$ \\
michael & $83$ & $\mathbf{511}$ \\
victoria & $120$ & $\mathbf{515}$ \\
\bottomrule
\vspace{-2.5mm}\end{tabular}
}~~\end{subtable}
     }
     \vspace{-1.5mm}
    \caption{Our method obtains significantly more multi-matchings (bottom right, see also Fig.~\ref{fig:qualitativeResultsTosca}) compared to the method of Cosmo~et~al.~\cite{cosmo2017consistent}, while at the same time achieving comparable errors (percentage of correct keypoints, PCK).} 
    \label{fig:resultsTosca}
\end{figure*} 
\newcommand{\figScaleD}{.4}
\newcommand{\figScaleE}{.45}
\newcommand{\figScaleFF}{.31}
\begin{figure}[h!] 
  \rotatebox[origin=l]{90}{$~~~$ \textsc{QuickMatch} \cite{Tron:kUBrCZhd}} 
  \vspace{-5mm}
     \centerline{ 
        \subfigure{\includegraphics[scale=\figScaleE]{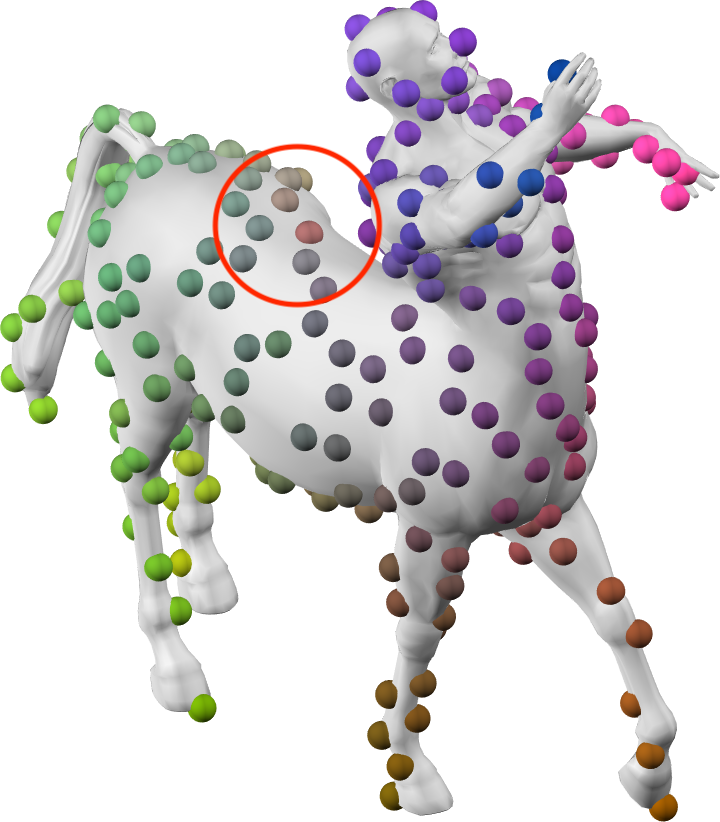}}%
        \subfigure{\includegraphics[scale=\figScaleE]{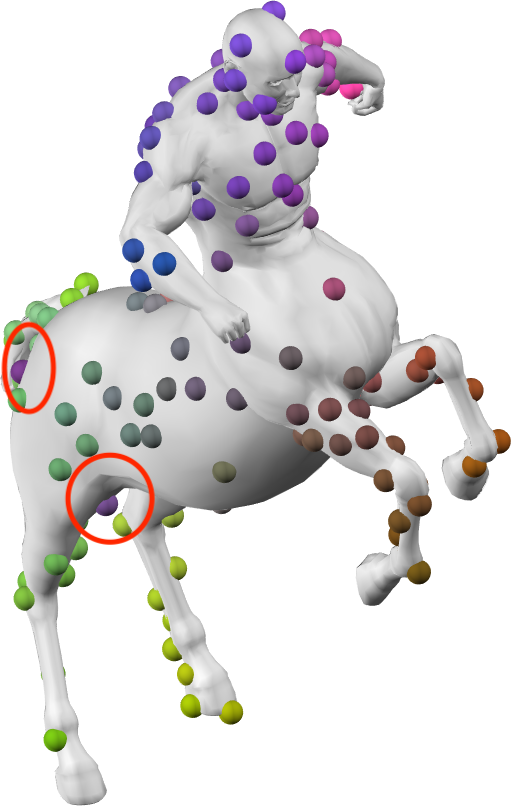}}%
        \subfigure{\includegraphics[scale=\figScaleFF]{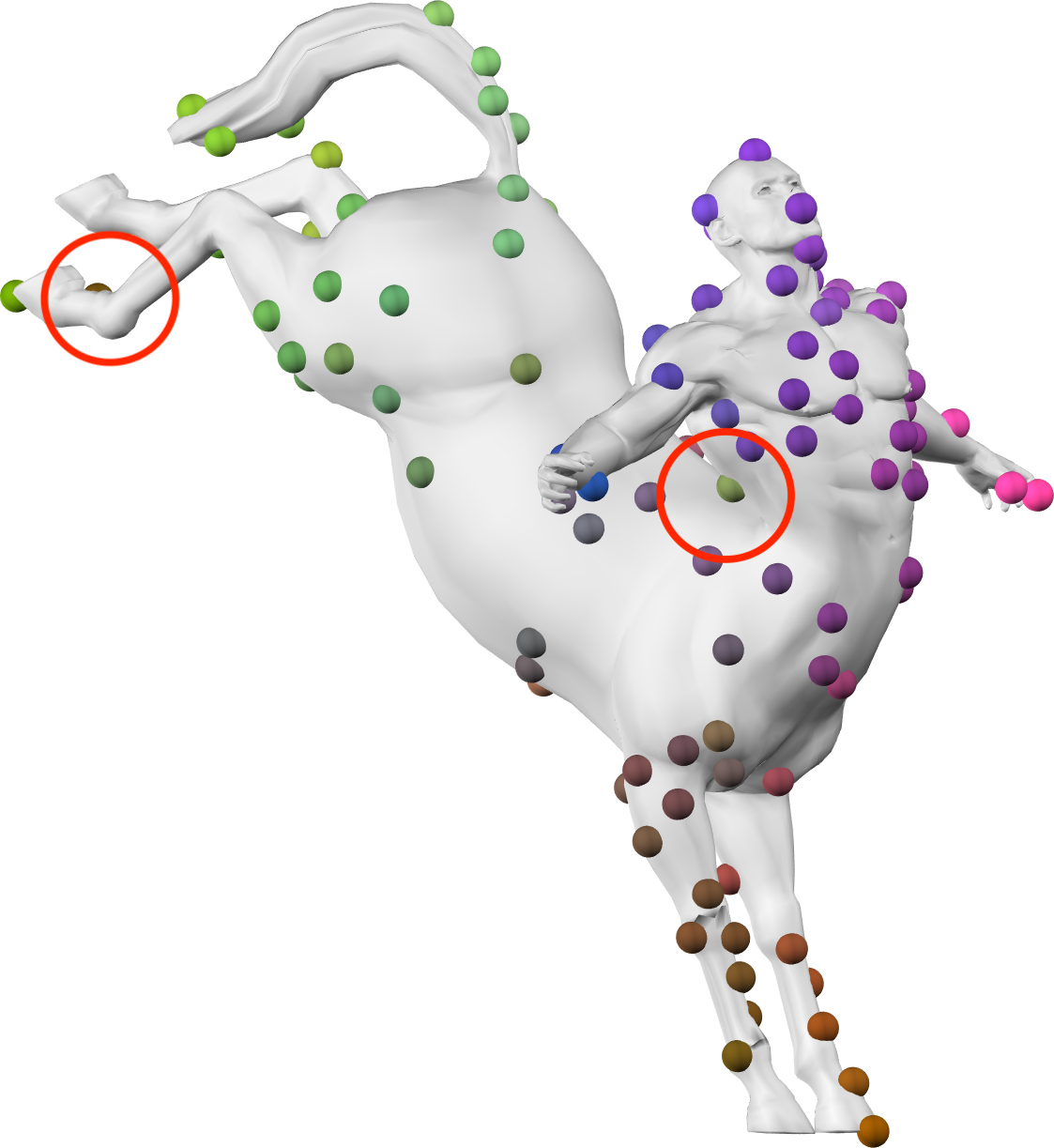}}%
     }
     \vspace{-5mm}
  \rotatebox[origin=l]{90}{$~~~$ \textsc{Cosmo et al.} \cite{cosmo2017consistent}}%
     \centerline{
        \subfigure{\includegraphics[scale=\figScaleE]{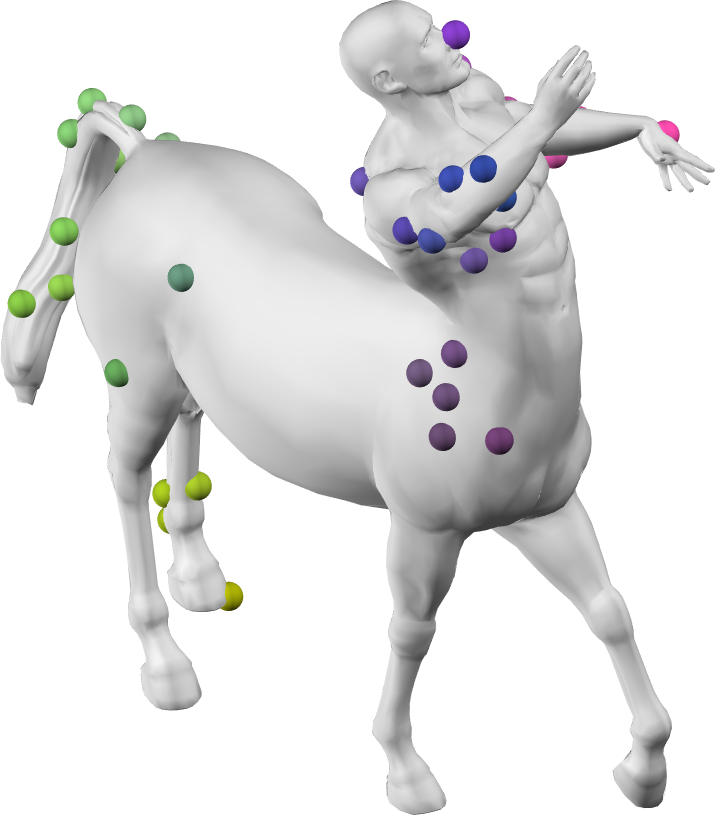}}%
        \subfigure{\includegraphics[scale=\figScaleE]{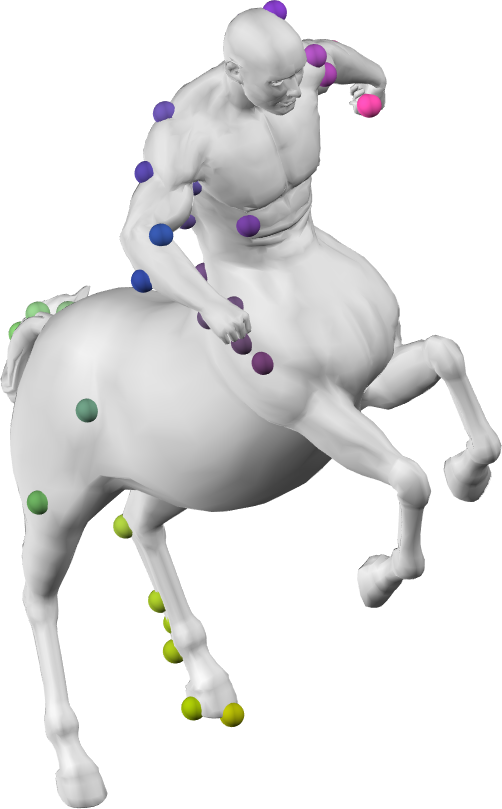}}%
        \subfigure{\includegraphics[scale=\figScaleFF]{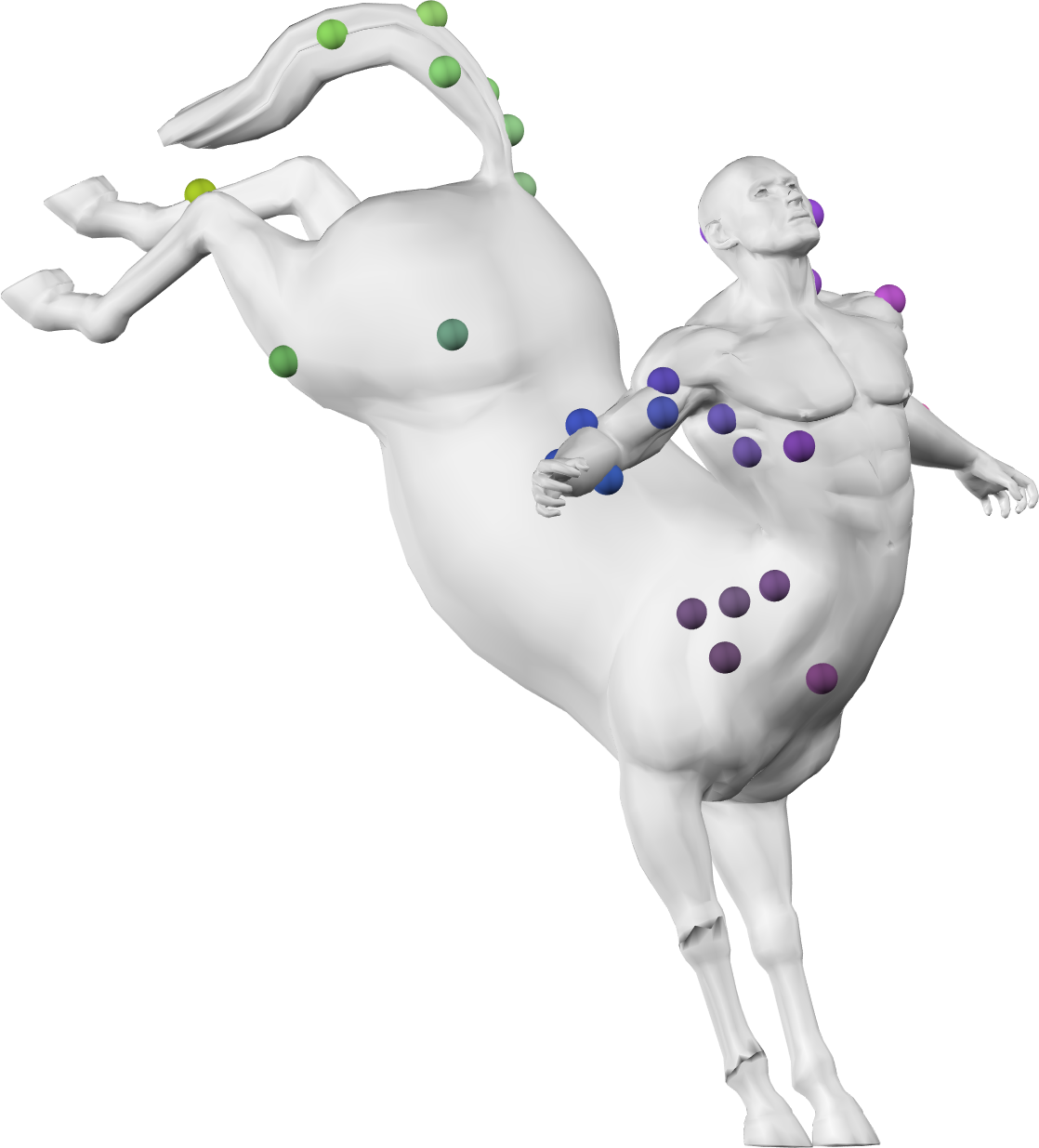}}%
     }
     \rotatebox[origin=l]{90}{$\quad\qquad$ \textsc{Ours}}
     \centerline{
        \subfigure{\includegraphics[scale=\figScaleE]{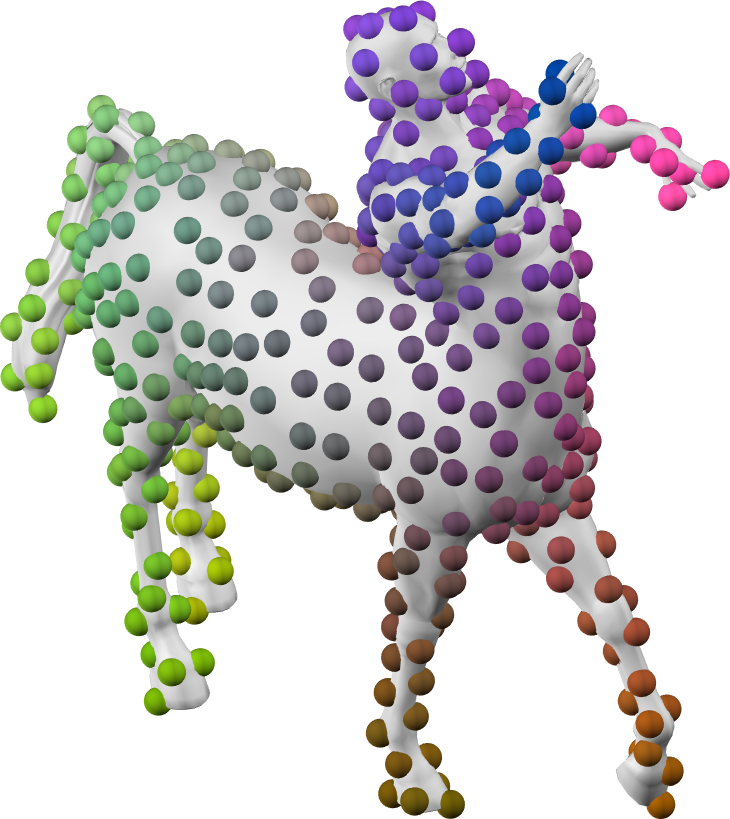}}%
        \subfigure{\includegraphics[scale=\figScaleE]{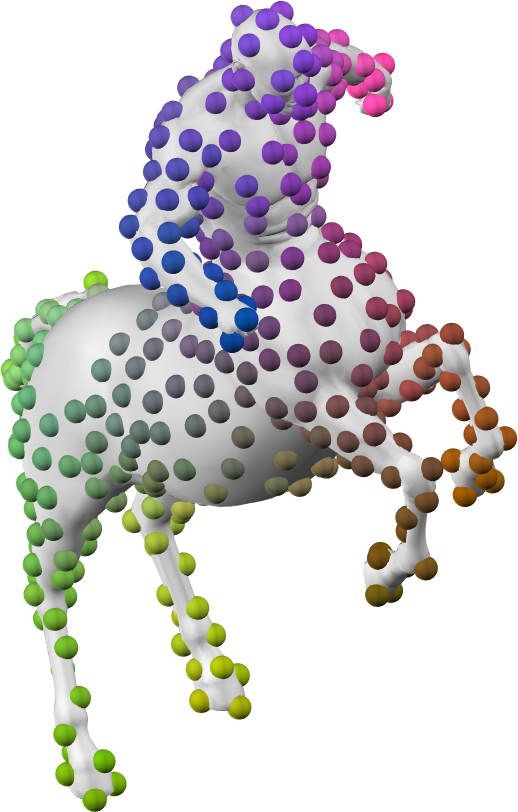}}%
        \subfigure{\includegraphics[scale=\figScaleFF]{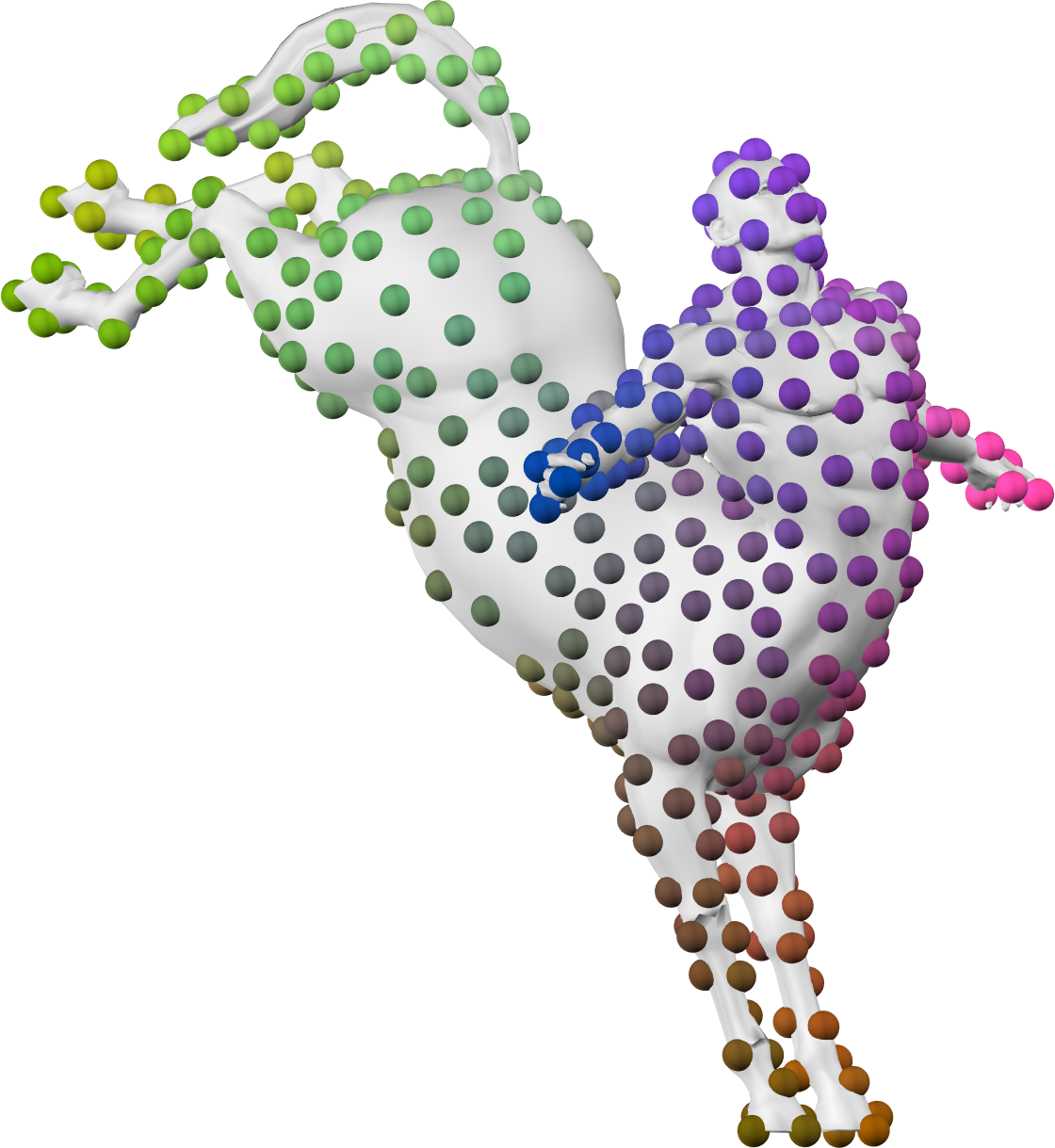}}%
     }
     \vspace{-5.5mm}
    \caption{Qualitative results on the Tosca \emph{centaur}. Dots with same colour indicate matched points. \textsc{QuickMatch} (top) does not take geometric consistency into account and thus leads to mismatches (red circles). While  Cosmo et al.~\cite{cosmo2017consistent} (centre) obtain only \emph{few} reliable matchings, leading to \emph{regions without correspondences}, our method (bottom) obtains \emph{significantly more reliable matchings}.} 
    \label{fig:qualitativeResultsTosca}
    \vspace{-1mm}
\end{figure} 

\subsection{Willow Dataset}
The evaluation on the Willow dataset~\cite{cho2013} is based on the experimental protocol from~\cite{Wang:2017ub}, where deep features have been used for matching.
 For this dataset the matchings are bijective, and hence for all methods we set the universe size $d$ to the number of annotated features. Since \textsc{QuickMatch}~\cite{Tron:kUBrCZhd} is tailored towards partial matchings, as it implicitly determines the universe size during its internal clustering, we have found that it does not perform so well on this dataset (see Table~\ref{table:willow}). 
In Table~\ref{table:willow} it can be seen that our method is superior compared to the other approaches.

\subsection{Tosca Dataset}
Based on the experimental setup of~\cite{cosmo2017consistent} using the Tosca dataset~\cite{bronstein2008numerical}, we compare our method with two other approaches that guarantee cycle-consistency, namely \textsc{QuickMatch}, and the sparse multi-shape matching approach by Cosmo~et~al.~\cite{cosmo2017consistent}. The feature descriptors on the shape surfaces are based on \emph{wave kernel signatures (WKS)} \cite{aubry2011wave}, we use \textsc{QuickMatch}$~(\rho_{\text{den}} {=} 0.2$) as initialisation, and set $\mu {=} {5}$. 

\paragraph{Multi-shape matching:} Quantitative results are shown in Fig.~\ref{fig:resultsTosca} and qualitative results are shown in Fig.~\ref{fig:qualitativeResultsTosca}. As explained before, \textsc{QuickMatch} ignores geometric relations between points, and thus leads to geometric inconsistencies (first row of Fig.~\ref{fig:qualitativeResultsTosca}).
While the method of Cosmo~et~al.~\cite{cosmo2017consistent} is able to incorporate geometric relations between points, one major limitation of their approach is that only a sparse subset of matchings is found. This may happen even when the shape collection is outlier free~\cite{cosmo2017consistent}. This behaviour can be seen in the second row of Fig.~\ref{fig:qualitativeResultsTosca}, where only few multi-matchings are obtained and hence there are large regions for which no correspondences are found. In contrast, our approach incorporates geometric consistency, produces significantly more multi-matchings (Fig.~\ref{fig:qualitativeResultsTosca}), and results in a percentage of correct keypoints (PCK) that is competitive to the method of Cosmo~et~al.~\cite{cosmo2017consistent}, see Fig.~\ref{fig:resultsTosca}. %

\section{Discussion \& Limitations}
The monotonicity properties of Alg.~\ref{alg:hppi}  rely on the assumption that the multi-adjacency matrix $A$ is positive semidefinite (Prop.~\ref{prop:mon}), which usually is not an issue in practice.
As explained in \cite{vestner2017efficient}, for getting rid of a bias towards far-away points, one usually applies a kernel (e.g.~Gaussian or heat kernel) to pairwise point distances, which directly results in positive semidefinite adjacency matrices~\cite{Bishop:2006ug}.

As the multi-matching problem
is non-convex, there is generally no guarantee to obtain the global  optimum. Moreover, the results produced by (most) methods
are dependent on the initialisation.
To analyse the sensitivity regarding the initialisation,
 we additionally evaluated our method on the HiPPI dataset with random initialisations, which achieved an average fscore of $0.90{\pm}0.08$ (\textsc{QuickMatch}: $0.71{\pm}0.03$; ours with \textsc{QuickMatch} init.: $0.92{\pm}0.05$, cf.~Fig.~\ref{fig:resultsJohanMim}). Hence, 
our method is not substantially affected by the initialisation and is able to achieve high-quality results even from random initialisations.

Our formulation of the geometric consistency term in Problem~\eqref{eq:mgm} corresponds to the Koopmans-Beckmann form of the QAP~\cite{Koopmans:1957gf}, which is strictly less general compared to Lawler's form $\VEC(X)^T W \VEC(X)$~\cite{Lawler:1963wn}. An interesting direction for future work is to devise an analogous algorithm for solving Lawler's form.

\section{Conclusion}
We presented a higher-order projected power iteration approach for multi-matching. Contrary to existing permutation synchronisation methods \cite{Pachauri:2013wx,Chen:2014uo,zhou2015multi,Shen:2016wx,Arrigoni:2017ut,Maset:YO8y6VRb}, our method takes geometric relations between points into account. %
Hence, our approach can be seen as a generalisation of permutation synchronisation. %
Moreover, previous multi-matching methods that consider geometric consistency~\cite{Yan:2015vc,Yan:2016vf} only allow to solve problems with up to few thousand points. In contrast, we demonstrated that our approach scales to tens of thousands of points. 
In addition to being able to account for geometric consistency, key properties of our method are convergence guarantees, efficiency, simplicity, and guaranteed cycle-consistency. Moreover, we demonstrated superior performance on three datasets, which highlights the practical relevance of our method. %

\section*{Acknowledgements}
This work was funded by the ERC Consolidator Grant 4DRepLy. We thank Thomas M\"{o}llenhoff for helpful input regarding the convergence, and Franziska M\"{u}ller for providing feedback on the manuscript.

{\small
\bibliographystyle{ieee}
\bibliography{myreferences}

\begin{thebibliography}{10}\itemsep=-1pt

\bibitem{Aflalo:2015hd}
Y.~Aflalo, A.~Bronstein, and R.~Kimmel.
\newblock {On convex relaxation of graph isomorphism}.
\newblock {\em Proceedings of the National Academy of Sciences},
  112(10):2942--2947, 2015.

\bibitem{Arrigoni:2017ut}
F.~Arrigoni, E.~Maset, and A.~Fusiello.
\newblock {Synchronization in the Symmetric Inverse Semigroup.}
\newblock In {\em ICIAP}, 2017.

\bibitem{aubry2011wave}
M.~Aubry, U.~Schlickewei, and D.~Cremers.
\newblock The wave kernel signature: A quantum mechanical approach to shape
  analysis.
\newblock In {\em ICCV Workshops}, 2011.

\bibitem{Bay:2008jv}
H.~Bay, A.~Ess, T.~Tuytelaars, and L.~Van~Gool.
\newblock {Speeded-up robust features (SURF)}.
\newblock {\em Computer Vision and Image Understanding}, 110(3):346--359, 2008.

\bibitem{Bazaraa:1979fh}
M.~S. Bazaraa and A.~N. Elshafei.
\newblock {An exact branch-and-bound procedure for the quadratic-assignment
  problem}.
\newblock {\em Naval Research Logistics Quarterly}, 26(1):109--121, 1979.

\bibitem{bernard:2018}
F.~Bernard, C.~Theobalt, and M.~Moeller.
\newblock {DS*: Tighter Lifting-Free Convex Relaxations for Quadratic Matching
  Problems}.
\newblock In {\em CVPR}, 2018.

\bibitem{Bertsekas:1998vt}
D.~P. Bertsekas.
\newblock {Network Optimization: Continuous and Discrete Models}.
\newblock Athena Scientific, 1998.

\bibitem{Bishop:2006ug}
C.~M. Bishop.
\newblock {\em {Pattern Recognition and Machine Learning (Information Science
  and Statistics)}}.
\newblock Springer-Verlag New York, Inc., 2006.

\bibitem{bronstein2008numerical}
A.~M. Bronstein, M.~M. Bronstein, and R.~Kimmel.
\newblock {\em Numerical geometry of non-rigid shapes}.
\newblock Springer Science \& Business Media, 2008.

\bibitem{Burkard:2009hp}
R.~Burkard, M.~Dell'Amico, and S.~Martello.
\newblock {Assignment problems}.
\newblock 2009.

\bibitem{Chen:2016tx}
Y.~Chen and E.~Candes.
\newblock {The Projected Power Method: An Efficient Algorithm for Joint
  Alignment from Pairwise Differences}.
\newblock {\em Communications on Pure and Applied Mathematics}, 71(8), 2018.

\bibitem{Chen:2014uo}
Y.~Chen, L.~J. Guibas, and Q.-X. Huang.
\newblock {Near-Optimal Joint Object Matching via Convex Relaxation}.
\newblock In {\em ICML}, 2014.

\bibitem{cho2013}
M.~Cho, K.~Alahari, and J.~Ponce.
\newblock Learning graphs to match.
\newblock In {\em ICCV}, 2013.

\bibitem{cosmo2017consistent}
L.~Cosmo, E.~Rodol{\`a}, A.~Albarelli, F.~M{\'e}moli, and D.~Cremers.
\newblock Consistent partial matching of shape collections via sparse modeling.
\newblock In {\em Computer Graphics Forum}, volume~36, pages 209--221, 2017.

\bibitem{Cour:2006un}
T.~Cour, P.~Srinivasan, and J.~Shi.
\newblock {Balanced graph matching}.
\newblock {\em NIPS}, 2006.

\bibitem{de1995higher}
L.~De~Lathauwer, P.~Comon, B.~De~Moor, and J.~Vandewalle.
\newblock Higher-order power method.
\newblock {\em Nonlinear Theory and its Applications, NOLTA95}, 1, 1995.

\bibitem{duchenne2011tensor}
O.~Duchenne, F.~Bach, I.-S. Kweon, and J.~Ponce.
\newblock A tensor-based algorithm for high-order graph matching.
\newblock {\em IEEE transactions on pattern analysis and machine intelligence},
  33(12):2383--2395, 2011.

\bibitem{Dym:2017ue}
N.~Dym, H.~Maron, and Y.~Lipman.
\newblock {DS++ - A flexible, scalable and provably tight relaxation for
  matching problems}.
\newblock {\em ACM Transactions on Graphics (TOG)}, 36(6), 2017.

\bibitem{Fogel:2013wt}
F.~Fogel, R.~Jenatton, F.~Bach, and A.~d'Aspremont.
\newblock {Convex Relaxations for Permutation Problems}.
\newblock In {\em NIPS}, 2013.

\bibitem{GolubVanLoan:1996}
G.~H. Golub and C.~F.~V. Loan.
\newblock {\em {Matrix Computations}}.
\newblock The Johns Hopkins University Press, 1996.

\bibitem{Heimann:2009kv}
T.~Heimann and H.-P. Meinzer.
\newblock {Statistical shape models for 3D medical image segmentation: A
  review}.
\newblock {\em Medical Image Analysis}, 13(4):543--563, 2009.

\bibitem{Huang:2013uk}
Q.-X. Huang and L.~Guibas.
\newblock {Consistent shape maps via semidefinite programming}.
\newblock In {\em Symposium on Geometry Processing}, 2013.

\bibitem{Jianga:vg}
B.~Jiang, J.~Tang, C.~Ding, and B.~Luo.
\newblock {Binary Constraint Preserving Graph Matching}.
\newblock In {\em CVPR}, 2017.

\bibitem{kezurer2015}
I.~Kezurer, S.~Z. Kovalsky, R.~Basri, and Y.~Lipman.
\newblock {Tight Relaxation of Quadratic Matching.}
\newblock {\em Comput. Graph. Forum}, 2015.

\bibitem{Koopmans:1957gf}
T.~C. Koopmans and M.~Beckmann.
\newblock {Assignment Problems and the Location of Economic Activities}.
\newblock {\em Econometrica}, 25(1):53, Jan. 1957.

\bibitem{Lawler:1963wn}
E.~L. Lawler.
\newblock {The quadratic assignment problem}.
\newblock {\em Management science}, 9(4):586--599, 1963.

\bibitem{le2018dc}
H.~A. Le~Thi and T.~P. Dinh.
\newblock Dc programming and dca: thirty years of developments.
\newblock {\em Mathematical Programming}, pages 1--64, 2018.

\bibitem{Leordeanu:2005ur}
M.~Leordeanu and M.~Hebert.
\newblock {A Spectral Technique for Correspondence Problems Using Pairwise
  Constraints}.
\newblock In {\em ICCV}, 2005.

\bibitem{Loiola:2ua4FrR7}
E.~M. Loiola, N.~M.~M. de~Abreu, P.~O.~B. Netto, P.~Hahn, and T.~M. Querido.
\newblock {A survey for the quadratic assignment problem.}
\newblock {\em European Journal of Operational Research}, 176(2):657--690,
  2007.

\bibitem{Maset:YO8y6VRb}
E.~Maset, F.~Arrigoni, and A.~Fusiello.
\newblock {Practical and Efficient Multi-View Matching}.
\newblock In {\em ICCV}, 2017.

\bibitem{Munkres:1957ju}
J.~Munkres.
\newblock {Algorithms for the Assignment and Transportation Problems}.
\newblock {\em Journal of the Society for Industrial and Applied Mathematics},
  5(1):32--38, Mar. 1957.

\bibitem{Nguyen:2011eb}
A.~Nguyen, M.~Ben-Chen, K.~Welnicka, Y.~Ye, and L.~J. Guibas.
\newblock {An Optimization Approach to Improving Collections of Shape Maps.}
\newblock {\em Computer Graphics Forum}, 30(5):1481--1491, 2011.

\bibitem{Nguyen:2015vq}
Q.~Nguyen, A.~Gautier, and M.~Hein.
\newblock A flexible tensor block coordinate ascent scheme for hypergraph
  matching.
\newblock 2015.

\bibitem{Olsson:2007wx}
C.~Olsson, A.~P. Eriksson, and F.~Kahl.
\newblock {Solving Large Scale Binary Quadratic Problems - Spectral Methods vs.
  Semidefinite Programming.}
\newblock {\em CVPR}, 2007.

\bibitem{Pachauri:2013wx}
D.~Pachauri, R.~Kondor, and V.~Singh.
\newblock {Solving the multi-way matching problem by permutation
  synchronization}.
\newblock In {\em NIPS}, 2013.

\bibitem{Pardalos:1993uo}
P.~M. Pardalos, F.~Rendl, and H.~Wolkowicz.
\newblock {The Quadratic Assignment Problem - A Survey and Recent
  Developments.}
\newblock {\em DIMACS Series in Discrete Mathematics}, 1993.

\bibitem{park2018consistent}
H.-M. Park and K.-J. Yoon.
\newblock Consistent multiple graph matching with multi-layer random walks
  synchronization.
\newblock {\em Pattern Recognition Letters}, 2018.

\bibitem{Schellewald:2005up}
C.~Schellewald and C.~Schn{\"o}rr.
\newblock {Probabilistic subgraph matching based on convex relaxation}.
\newblock In {\em EMMCVPR}, 2005.

\bibitem{Schwartz:1994db}
B.~L. Schwartz.
\newblock {A computational analysis of the Auction algorithm}.
\newblock {\em European Journal of Operational Research}, 74(1):161--169, 1994.

\bibitem{Shen:2016wx}
Y.~Shen, Q.~Huang, N.~Srebro, and S.~Sanghavi.
\newblock {Normalized Spectral Map Synchronization.}
\newblock In {\em NIPS}, 2016.

\bibitem{Shi:2016tj}
X.~Shi, H.~Ling, W.~Hu, and J.~Xing.
\newblock {Tensor power iteration for multi-graph matching}.
\newblock In {\em CVPR}, 2016.

\bibitem{swoboda2017b}
P.~Swoboda, C.~Rother, H.~A. Alhaija, D.~Kainmüller, and B.~Savchynskyy.
\newblock Study of lagrangean decomposition and dual ascent solvers for graph
  matching.
\newblock In {\em CVPR}, 2017.

\bibitem{Torresani:2013gj}
L.~Torresani and V.~Kolmogorov.
\newblock {A dual decomposition approach to feature correspondence}.
\newblock {\em TPAMI}, 35(2):259--271, 2013.

\bibitem{Tron:kUBrCZhd}
R.~Tron, X.~Zhou, C.~Esteves, and K.~Daniilidis.
\newblock {Fast Multi-Image Matching via Density-Based Clustering}.
\newblock In {\em ICCV}, 2017.

\bibitem{vestner2017efficient}
M.~Vestner, Z.~L{\"a}hner, A.~Boyarski, O.~Litany, R.~Slossberg, T.~Remez,
  E.~Rodola, A.~Bronstein, M.~Bronstein, R.~Kimmel, et~al.
\newblock Efficient deformable shape correspondence via kernel matching.
\newblock In {\em 3DV}, 2017.

\bibitem{Vestner:2017tj}
M.~Vestner, R.~Litman, E.~Rodol{\`a}, A.~M. Bronstein, and D.~Cremers.
\newblock {Product Manifold Filter - Non-Rigid Shape Correspondence via Kernel
  Density Estimation in the Product Space.}
\newblock {\em CVPR}, 2017.

\bibitem{Wang:2017ub}
Q.~Wang, X.~Zhou, and K.~Daniilidis.
\newblock {Multi-Image Semantic Matching by Mining Consistent Features.}
\newblock In {\em CVPR}, 2018.

\bibitem{Williams:1997vj}
M.~L. Williams, R.~C. Wilson, and E.~R. Hancock.
\newblock {Multiple graph matching with Bayesian inference.}
\newblock {\em Pattern Recognition Letters}, 1997.

\bibitem{Yan:2016vf}
J.~Yan, M.~Cho, H.~Zha, and X.~Yang.
\newblock {Multi-graph matching via affinity optimization with graduated
  consistency regularization}.
\newblock {\em TPAMI}, 38(6):1228--1242, 2016.

\bibitem{Yan:2014wi}
J.~Yan, Y.~Li, W.~Liu, H.~Zha, X.~Yang, and S.~M. Chu.
\newblock {Graduated Consistency-Regularized Optimization for Multi-graph
  Matching.}
\newblock {\em ECCV}, 2014.

\bibitem{Yan:2013ve}
J.~Yan, Y.~Tian, H.~Zha, X.~Yang, Y.~Zhang, and S.~M. Chu.
\newblock {Joint Optimization for Consistent Multiple Graph Matching.}
\newblock {\em ICCV}, 2013.

\bibitem{Yan:2015vc}
J.~Yan, J.~Wang, H.~Zha, and X.~Yang.
\newblock {Consistency-driven alternating optimization for multigraph matching:
  A unified approach}.
\newblock {\em IEEE Transactions on Image Processing}, 2015.

\bibitem{Yan:2015wr}
J.~Yan, H.~Xu, H.~Zha, X.~Yang, and H.~Liu.
\newblock {A matrix decomposition perspective to multiple graph matching}.
\newblock In {\em ICCV}, 2015.

\bibitem{Zaslavskiy:2009fq}
M.~Zaslavskiy, F.~Bach, and J.-P. Vert.
\newblock {A Path Following Algorithm for the Graph Matching Problem}.
\newblock {\em TPAMI}, 31(12):2227--2242, 2009.

\bibitem{Zhao:1998wc}
Q.~Zhao, S.~E. Karisch, F.~Rendl, and H.~Wolkowicz.
\newblock {Semidefinite programming relaxations for the quadratic assignment
  problem}.
\newblock {\em Journal of Combinatorial Optimization}, 2(1):71--109, 1998.

\bibitem{Zhou:2016ty}
F.~Zhou and F.~De~la Torre.
\newblock {Factorized Graph Matching}.
\newblock {\em TPAMI}, 38(9):1774--1789, 2016.

\bibitem{zhou2015multi}
X.~Zhou, M.~Zhu, and K.~Daniilidis.
\newblock Multi-image matching via fast alternating minimization.
\newblock In {\em ICCV}, 2015.

\end{thebibliography}
}

\end{document}